\newtheorem{theorem}{Theorem}
\newtheorem{lemma}{Lemma}
\newtheorem{exmp}{Example}[section]
\newif\if@restonecol
\title{Deterministic Policy Gradients With General State Transitions}
\author{Qingpeng Cai, Ling Pan, Pingzhong Tang\\
Institute for Interdisciplinary Information Sciences\\
  Tsinghua University\\
  \texttt{\{cqp14,pl17\}@mails.tsinghua.edu.cn},
  \texttt{kenshinping@gmail.com} \\}
\begin{document}
\maketitle

\begin{abstract}

We study a reinforcement learning setting, where the state transition function is a convex combination of a stochastic continuous function and a deterministic function. Such a setting generalizes the widely-studied stochastic state transition setting, namely the setting of deterministic policy gradient (DPG).

We firstly give a simple example to illustrate that the deterministic policy gradient may be infinite under deterministic state transitions, and introduce a theoretical technique to prove the existence of the policy gradient in this generalized setting. Using this technique,
we prove that the deterministic policy gradient indeed exists for a certain set of discount factors, and further prove two conditions that guarantee the existence for all discount factors.
We then derive a closed form of the policy gradient whenever exists. Furthermore, to overcome the challenge of high sample complexity of DPG in this setting, we propose the {\em Generalized Deterministic Policy Gradient} (GDPG) algorithm. The main innovation of the algorithm is a new method of applying model-based techniques to the model-free algorithm, the deep deterministic policy gradient algorithm (DDPG). GDPG optimize the long-term rewards of the model-based augmented MDP subject to a constraint that the long-rewards of the MDP is less than the original one.

We finally conduct extensive experiments comparing GDPG with state-of-the-art methods and the direct model-based extension method of DDPG on several standard continuous control benchmarks.
Results demonstrate that GDPG substantially outperforms DDPG, the model-based extension of DDPG and other baselines in terms of both convergence and long-term rewards in most environments.
\end{abstract}

\section{Introduction}

Reinforcement learning has been one of the most successful computational tools for solving complex decision making problems \cite{sutton1998reinforcement}, with extensive applications in both discrete tasks such as general game playing \cite{mnih2013playing, mnih2015human} and continuous control tasks such as robotics \cite{kober2013reinforcement}. In contrast to the traditional value-based methods \cite{tesauro1995temporal, watkins1992q,mnih2013playing,mnih2015human} that are meant for solving problems with discrete and low-dimensional action space, policy gradient methods \cite{peters2011policy, sutton2000policy} aim to tackle these limitations, by optimizing a parameterized policy via estimating the gradient of the expected long-term reward, using gradient ascent.

\cite{silver2014deterministic} propose the deterministic policy gradient (DPG) algorithm that aims to find an optimal deterministic policy, which lowers the variance when estimating the policy gradient \cite{zhao2011analysis}, compared to stochastic policies  \cite{sutton2000policy}. It is shown that the algorithm can be applied to domains with continuous and high-dimensional action spaces. \cite{lillicrap2015continuous}  further  propose the deep deterministic policy gradient (DDPG) algorithm, by combining deep neural networks to improve convergence. It is recognized that DDPG has been successful in robotic control tasks such as locomotion \cite{song2017recurrent} and manipulation \cite{gu2017deep}.

Despite the effectiveness of DDPG in these tasks, it is limited for problems with stochastic continuous state transitions.
Here, the continuity means that the probability density of the next state is continuous in the action taken at the current state. In fact, many important control problems, such as MountainCar, Pendulum \cite{1606.01540}, and autonomous driving, include both stochastic and deterministic state transitions. For example, in most autonomous driving tasks, state transitions are deterministic under normal driving conditions, yet are still stochastic due to sudden disturbances. As a result,  DDPG, which assumes stochastic state transitions, does not generalize well in practice.

Tasks with deterministic state transitions pose serious technical challenges due to the discontinuity of the transition function, where the gradient of the transition probability density function over actions does not always exist. \cite{werbos1990menu, fairbank2008reinforcement, heess2015learning} consider the gradient of the value function over states and the deterministic policy gradient in the setting of deterministic state transitions, but the existence of the value function's gradient over states and the deterministic policy gradient is not studied.
Lacking of theoretical guarantees for the existence of the gradient limits the applicability of deterministic policy gradient algorithms. As a result, an important question for policy gradient based methods is,\\
{\em Does the gradient exist in settings with deterministic state transitions? If yes, can one solve the problem efficiently by its gradient? }

In this paper, we study a generalized setting, where the state transition is a convex combination of a stochastic continuous transition function and a deterministic discontinuous transition function. As a result, it includes both the stochastic case and the deterministic case as special cases. Our setting is arguably closer to the mixed control problems mentioned above than those stochastic settings.  We first give a simple example to illustrate that the deterministic policy gradient may be infinite under deterministic state transitions.  Then we introduce a new theoretical technique to prove the existence of the policy gradient in this generalized setting. Using this technique, we prove that the deterministic policy gradient indeed exists for a certain set of discount factors. We further present two conditions that guarantee the existence for all discount factors. We then derive a closed form of the policy gradient.

However, the estimation of the deterministic policy gradient is much more challenging due to the sample complexity of model-free algorithms  \cite{schulman2015trust} and complex state transitions. As for the state transition, the difficulty of the computation of the gradient mainly comes from the dependency of the policy gradient and the gradient of the value function over the state. Such computation may involve infinite times of sampling the whole state space. Thus applying DDPG directly in a general setting even with low-dimensional state space may incur high sample complexity.

To overcome these challenges, we approximate the original Markov decision process (MDP) by a model-based augmented MDP with the same reward function and the transition function being the expectation of original MDP. By the form of the deterministic policy gradient with deterministic state transitions, we get that the model-based augmented MDP has a simple structure, which allows for more efficient computations and faster convergence than model-free methods \cite{li2004iterative, levine2013guided, watter2015embed}. Unfortunately, applying this mode-based technique directly does not help to solve environments with large continuous state space as it is hard to represent the transition dynamics \cite{wahlstrom2015pixels}. 
This leads to an essential question:\\
{\em How to apply the model-based technique to deterministic policy gradient algorithms effectively? }

We then consider a program that maximizes the long-term rewards of the augmented MDP with the constraint that its long-term rewards is less than that of the original MDP. The intuition is that we choose a objective with less sample complexity to optimize, and it serves as a lower bound of the original objective. Note that the improvement of the new objective, guarantees the improvement of the original objective.  As the constrainted problem is hard to optimize, we choose to optimize the Lagrangian dual function of the program, which can be interpreted as a weighted objective between the long-term reward of the original MDP and the augmented MDP. Based on this dual function, we propose the {\em Generalized Deterministic Policy Gradient (GDPG) algorithm}. The algorithm updates the policy by stochastic gradient ascent with the gradient of the weighted objective over the parameter of the policy, and the weight maintains a trade-off between fast convergence and performance.

To sum up, the main contribution of the paper is as follows:

\begin{itemize}
\item First of all, we provide a theoretical guarantee for the existence of the gradient in settings with deterministic state transitions.
\item Secondly, we propose a novel policy gradient algorithm, called Generalized Deterministic Policy Gradient (GDPG), which combines the model-free and model-based methods.
GDPG reduces sample complexity, enables faster convergence and performance improvement.
\item Finally, we conduct extensive experiments on standard benchmarks comparing with state-of-the-art stochastic policy gradient methods including TRPO \cite{schulman2015trust}, ACKTR \cite{wu2017scalable} and the direct model-based extension of DDPG, called MDPG.
Results confirm that GDPG significantly outperforms other algorithms in terms of both convergence and performance.
\end{itemize}

\section{Preliminaries}

A Markov decision process (MDP) is a tuple $(S, A, p, r, \gamma, p_1)$, where $\mathcal{S}$ and $\mathcal{A}$ denote the set of states and actions respectively.
Let $p(s_{t+1} | s_t, a_t)$ represent the conditional density from state $s_t$ to state $s_{t+1}$ under action $a_t$, which satisfies the Markov property, i.e., $p(s_{t+1} | s_0, a_0, ..., s_t, a_t) = p(s_{t+1}| s_t, a_t).$
The density of the initial state distribution is denoted by $p_0(s)$.
At each time step $t$, the agent interacts with the environment with a deterministic policy $\mu_{\theta}$, which is parameterized by $\theta$.
We use $r(s_t, a_t)$ to represent the corresponding immediate reward, contributing to the discounted overall rewards from state $s_0$ following $\mu_{\theta}$, denoted by $J(\mu_{\theta}) = \mathbb{E}[\sum_{k=0}^{\infty}{\gamma}^{k}r(a_k, s_k)|\mu_{\theta},s_0]$.
Here, $\gamma \in [0, 1]$ is the discount factor.
The Q-function of state $s_t$ and action $a_t$ under policy $\mu_{\theta}$ is denoted by $Q^{\mu_{\theta}}(s_t,a_t) =  \mathbb{E}[\sum_{k=t}^{\infty}{\gamma}^{k-t}r(a_k, s_k)|\mu_{\theta},s_t,a_t]$.
The corresponding value function of state $s_t$ under policy $\mu_{\theta}$ is denoted by $V^{\mu_{\theta}}(s_t)=Q^{\mu_{\theta}}(s_t,\mu_{\theta}(s_t))$.
We denote the density at state $s^{'}$ after $t$ time steps from state $s$ by $p(s,s^{'},t,\mu_{\theta})$ following the policy $\mu_{\theta}$.
We denote the (improper) discounted state distribution by $\rho^{\mu_{\theta}}(s^{'})=\int_{\mathcal{S}}^{}\sum_{t=1}^{\infty}{\gamma}^{t-1}p_0(s)p(s,s^{'},t,\mu_{\theta})ds$.
The agent aims to find an optimal policy that maximizes $J(\mu_{\theta})$.

\subsection{Why is the DPG theorem not applicable for deterministic state transitions?}
An important property of the DPG algorithms is the Deterministic Policy Gradient Theorem \cite{silver2014deterministic},
$
\bigtriangledown_{\theta}J(\mu_{\theta})=\int_{\mathcal{S}}\rho^{\mu_{\theta}}(s)(\bigtriangledown_{\theta}\mu_{\theta}(s)\bigtriangledown_{a}Q^{\mu_{\theta}}(s,a)|_{a=\mu_{\theta}(s)})ds,$ which proves the existence of the deterministic policy gradient. The DPG theorem holds under the regular condition presented by \cite{silver2014deterministic}, i.e., $p(s^{'}|s,a)$ is continuous in $a$. The arguments in the proof of the DPG theorem do not work without this condition\footnote{Readers can refer to http://proceedings.mlr.press/v32/silver14-supp.pdf}. 

Now we give a simple example to show the policy gradient is infinite for some discount factors.

\begin{exmp}
\label{example}
Given a MDP with two dimensional state spaces and action spaces, whose transition and reward functions are defined by

$T(s,a)={(2s_1+2s_2+a_1,2s_1+2s_2+a_2)}^{T}$, $r(s,a)=-s^{T}a$.
Consider a deterministic policy $\mu_{\theta}(s)=\theta$, then $\bigtriangledown_{s}T(s,\mu_{\theta}(s))=\begin{bmatrix}
 2&2 \\ 
 2& 2
\end{bmatrix}$,
and $\bigtriangledown_{s}V^{\mu_{\theta}}(s)=-(I+\sum_{n=1}^{\infty}{\gamma}^{n}
\begin{bmatrix}
 2^{2n-1}&2^{2n-1} \\ 
 2^{2n-1}& 2^{2n-1}
\end{bmatrix})\theta. $
Then $\bigtriangledown_{s}V^{\mu_{\theta}}(s)$ converges if and only if $\gamma<1/4$. 
\end{exmp}

One must need a new technique to determine the existence of the gradient of $J(\mu_{\theta})$ over $\theta$ in irregular cases.

\section{Deterministic State Transitions}
\label{Deter}
In this section we study a simple setting where the state transition is a deterministic function. As discussed before, the DPG theorem does not apply to this setting. To analyze the gradient of a deterministic policy, we let $T(s,a)$ denote the next state given the current state $s$ and the action $a$. Without loss of generality, we assume that $T(s,a),\bigtriangledown_aT(s,a), \bigtriangledown_sT(s,a), r(s,a), \bigtriangledown_sr(s,a), \bigtriangledown_ar(s,a)$ are all continuous in $s$ and $a$ and bounded. By definition, $
\bigtriangledown_{\theta} V^{\mu_{\theta}}(s) = \bigtriangledown_{\theta} (r(s,\mu_{\theta}(s))+\gamma  V^{\mu_{\theta}}(s^{'})|_{s^{'}=T(s,\mu_{\theta}(s))}).$
Thus the key of  the existence of the gradient of $V^{\mu_{\theta}}(s)$ over $\theta$ is the existence of $\bigtriangledown_{s}V^{\mu_{\theta}}(s)$. Now we give a sufficient condition of the existence of $\bigtriangledown_{s}V^{\mu_{\theta}}(s)$.

\begin{lemma}
\label{lemma1}
For any policy $\mu_{\theta}$, let $n$ denote the dimension of the state, and $c$ be the maximum of the max norm of all Jacobain matrices, $\max_{s}||\bigtriangledown_{s}T(s,\mu_{\theta}(s))||_{max}$, for any discount factor $\gamma$ in $[0,\frac{1}{nc})$, 
$\bigtriangledown_{s}V^{\mu_{\theta}}(s)$ exists.
\end{lemma}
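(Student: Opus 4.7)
The plan is to express $\nabla_s V^{\mu_\theta}(s)$ as an explicit infinite series obtained by iterating the chain rule on the Bellman identity, then to establish the series' absolute and uniform convergence under the given threshold on $\gamma$, and finally to identify its limit with the genuine gradient via a standard differentiation-under-uniform-convergence argument.

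First, I would set $\phi(s) := T(s,\mu_\theta(s))$ for the closed-loop transition and $r^\mu(s) := r(s,\mu_\theta(s))$ for the closed-loop reward. The Bellman identity reads $V^{\mu_\theta}(s) = r^\mu(s) + \gamma V^{\mu_\theta}(\phi(s))$, and formal application of the chain rule followed by iteration produces the candidate
\[
g(s) \;:=\; \sum_{k=0}^{\infty} \gamma^{k} \Bigl(\prod_{i=0}^{k-1} J_{\phi}(\phi^{(i)}(s))^{\top}\Bigr) \nabla_s r^{\mu}(\phi^{(k)}(s)),
\]
where $\phi^{(0)}$ is the identity, $\phi^{(i+1)} := \phi \circ \phi^{(i)}$, and $J_{\phi}(s) = \nabla_s T(s,\mu_\theta(s))$ is the Jacobian appearing in the lemma. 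By the standing smoothness and boundedness assumptions on $r$ and $T$, the quantity $K := \sup_s \|\nabla_s r^{\mu}(s)\|_{\infty}$ is finite, and $\|J_\phi(s)\|_{\max} \leq c$ for every $s$.

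Second, I would bound the matrix product with the elementary inequality $\|AB\|_{\max} \leq n \|A\|_{\max}\|B\|_{\max}$ for $n\times n$ matrices, which gives $\|\prod_{i=0}^{k-1}J_{\phi}(\phi^{(i)}(s))^{\top}\|_{\max} \leq n^{k-1}c^{k}$. Coupled with the matrix--vector estimate $\|Av\|_{\infty} \leq n\|A\|_{\max}\|v\|_{\infty}$, the $k$-th term of $g(s)$ is uniformly bounded by $(nc\gamma)^{k} K$. The hypothesis $\gamma < 1/(nc)$ makes the ratio $nc\gamma$ strictly less than one, so the series converges absolutely and uniformly in $s$.

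Third, I would identify $g(s)$ with $\nabla_s V^{\mu_\theta}(s)$ through truncation. Define $V^{\mu_\theta}_{N}(s) := \sum_{k=0}^{N}\gamma^{k}r^{\mu}(\phi^{(k)}(s))$; each $V^{\mu_\theta}_{N}$ is $C^{1}$ as a finite composition of $C^{1}$ maps, and its gradient is exactly the $N$-th partial sum of the series defining $g$. As $N\to\infty$, $V^{\mu_\theta}_{N} \to V^{\mu_\theta}$ uniformly (bounded rewards and $\gamma < 1$), while $\nabla_s V^{\mu_\theta}_{N} \to g$ uniformly by the estimate above. The standard theorem that uniform convergence of $C^{1}$ functions together with uniform convergence of their derivatives gives differentiability of the limit with derivative equal to the uniform limit then yields $\nabla_s V^{\mu_\theta}(s) = g(s)$, which in particular shows existence.

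The main obstacle is the matrix-norm bookkeeping: because $c$ bounds the Jacobian only in max norm, each additional factor in the product contributes an unavoidable multiplicative $n$, and this is precisely what pins the convergence threshold at $\gamma < 1/(nc)$; tighter operator or spectral bounds would improve the constant but the statement supplies only the max-norm hypothesis. Once the geometric bound $(nc\gamma)^k K$ is in place, everything else is a routine invocation of uniform-convergence-of-derivatives, and I would spend the least effort on that part.
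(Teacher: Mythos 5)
Your proof is correct, and it reaches the same threshold $\gamma<1/(nc)$ by a genuinely different and more elementary route than the paper. The paper formally differentiates the Bellman identity, unrolls the resulting recursion into an integral over states weighted by indicator functions, and then argues convergence by a case analysis on how often each state is revisited along the deterministic trajectory (never, once, or infinitely often); the infinite-revisit case is handled as a geometric series of matrix powers whose spectral radius is controlled through row-sum bounds and a Jordan-normal-form computation, after which dominated convergence is invoked to swap the limit and the integral. You instead write $V^{\mu_\theta}$ itself as a series of $C^1$ truncated rollouts, bound the $k$-th derivative term by $(nc\gamma)^kK$ via submultiplicativity of the max norm, and conclude with the standard theorem on uniform convergence of derivatives. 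Your route buys two things: it avoids the trajectory-structure case analysis and the measure-theoretic limit interchange entirely, and it is actually tighter on the logical point at issue --- the paper differentiates $V^{\mu_\theta}$ before having established that the derivative exists, so its unrolling is formal, whereas your truncation argument derives existence rather than assuming it. What the paper's heavier machinery buys in exchange is reusability: the spectral/cycle analysis is precisely what later generalizes to the eigenvalue hypothesis (Condition A.2) of the GDPG theorem, and the integral-with-indicator formulation feeds directly into the closed-form policy gradient of Theorem 1, neither of which a crude norm bound can deliver. One minor point to tidy: your uniform convergence of $V_N^{\mu_\theta}$ to $V^{\mu_\theta}$ uses $\gamma<1$, which should be stated alongside $\gamma<1/(nc)$ when $nc\le 1$.
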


\begin{proof}
By definition,
$
V^{\mu_{\theta}}(s)=Q^{\mu_{\theta}}(s,\mu_{\theta}(s))=r(s,\mu_{\theta}(s))+\gamma V^{\mu_{\theta}}(s^{'})|_{s^{'}=T(s,\mu_{\theta}(s))}). 
$
Then 
\begin{equation}
\label{v_s}
\begin{split}
\bigtriangledown_{s}V^{\mu_{\theta}}(s)&=\bigtriangledown_{s}r(s,\mu_{\theta}(s))+\gamma \bigtriangledown_{s}T(s,\mu_{\theta}(s)) \bigtriangledown_{s^{'}}V^{\mu_{\theta}}(s^{'})|_{s^{'}=T(s,\mu_{\theta}(s))}.
\end{split}
\end{equation}
By unrolling (\ref{v_s}) with infinite steps, we get

$$\bigtriangledown_{s}V^{\mu_{\theta}}(s)=\sum_{t=0}^{\infty}\int_{\mathcal{S}}{\gamma}^{t}g(s,t,\mu_{\theta})I(s,s^{'},t,\mu_{\theta})\bigtriangledown_{s^{'}}r(s^{'},\mu_{\theta}(s^{'}))ds^{'},$$
where $I(s,s^{'},t,\mu_{\theta})$ is an indicator function that indicates whether $s^{'}$ is obtained after $t$ steps from the state $s$ following the policy $\mu_{\theta}$.
Here, $g(s,t,\mu_{\theta})=\prod_{i=0}^{t-1}\bigtriangledown_{s_i}T(s_i,\mu_{\theta}(s_i)), $ where $s_0=s$ and $s_i$ is the state after $i$ steps following policy $\mu_{\theta}$.
The state transitions and policies are both deterministic. 
We now prove that for any $\mu_{\theta},s,s^{'}$ and $\gamma\in [0,\frac{1}{nc})$, $A(s)=\sum_{t=0}^{\infty}{\gamma}^{t}g(s,t,\mu_{\theta})I(s,s^{'},t,\mu_{\theta})$ converges.
We describe the proof sketch here and the complete proof is referred to Appendix A.
For each state $s'$, which is reached from the initial state $s$ with infinite steps, there are three cases due to deterministic state transitions:
never visited, visited once, and visited infinite times.
It is easy to see that $A(s)$ converges in the first two cases.
In the last case, as $A(s)$ is the sum of the power of the matrix ${\gamma}^{t_2}g(s,t_2,\mu_{\theta})$, then we get a upper bound of $\gamma$ such that $A(s)$ converges. By Lebesgue's Dominated Convergence Theorem \cite{royden1968real}, we exchange the order of the limit and the integration,
$
\bigtriangledown_{s}V^{\mu_{\theta}}(s)=\int_{\mathcal{S}}\sum_{t=0}^{\infty}{\gamma}^{t}g(s,t,\mu_{\theta})I(s,s^{'},t,\mu_{\theta})\bigtriangledown_{s^{'}}r(s^{'},\mu_{\theta}(s^{'}))ds^{'}.
$
By the continuity of $T$, $r$ and $\mu_{\theta}$, the gradient of $V^{\mu_{\theta}}(s)$ over $s$ exists.
\end{proof}

Note that the condition proposed in Lemma 1 is indeed necessary in Example \ref{example}, where $n=2, c=2$ and the gradient exists if and only if the discount factor $\gamma<1/4$.
By Lemma \ref{lemma1}, we show that the deterministic policy gradient exists and obtain the closed form. The proof is referred to Appendix B.

\begin{theorem}
\label{theorem_deter}
For any policy $\mu_{\theta}$ and MDP with deterministic state transitions, for any discount factor $\gamma$ in $[0,\frac{1}{nc})$, the policy gradient exists, and

\begin{equation*}
\label{close_j}
\bigtriangledown_{\theta}J(\mu_{\theta})=\int_{\mathcal{S}}\rho^{\mu_{\theta}}(s)\bigtriangledown_{\theta}\mu_{\theta}(s)(\bigtriangledown_{a}r(s,a)|_{a=\mu_{\theta}(s)}+\gamma  \bigtriangledown_{a} T(s,a)|_{a=\mu_{\theta}(s)} \bigtriangledown_{s^{'}}V^{\mu_{\theta}}(s^{'})|_{s^{'}=T(s,a)})ds.
\end{equation*}
\end{theorem}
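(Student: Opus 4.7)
The plan is to reduce the theorem to the chain-rule computation that underlies the original DPG theorem, but to carry out every step in a way that is legitimate in the deterministic-transition setting. The crucial new ingredient is Lemma~\ref{lemma1}: it guarantees that $\bigtriangledown_{s'} V^{\mu_{\theta}}(s')$ exists whenever $\gamma \in [0, 1/(nc))$, so the gradient can be propagated through the deterministic map $T$ instead of through a transition density (which here collapses to a Dirac measure).

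First I would start from the Bellman-type identity $V^{\mu_{\theta}}(s) = r(s,\mu_{\theta}(s)) + \gamma V^{\mu_{\theta}}(T(s,\mu_{\theta}(s)))$ and differentiate both sides with respect to $\theta$. The parameter $\theta$ enters the right-hand side in three ways---through $\mu_{\theta}(s)$ inside $r$, through $\mu_{\theta}(s)$ inside the argument of $T$, and through the functional dependence of $V^{\mu_{\theta}}$ itself---and Lemma~\ref{lemma1} ensures that the chain rule produces the well-defined recursion
\[
\bigtriangledown_{\theta} V^{\mu_{\theta}}(s) \;=\; \psi(s) \;+\; \gamma\, \bigtriangledown_{\theta} V^{\mu_{\theta}}(s')\big|_{s'=T(s,\mu_{\theta}(s))},
\]
where $\psi(s)$ equals $\bigtriangledown_{\theta}\mu_{\theta}(s)$ times the bracketed integrand in the theorem statement. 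Second, I would unroll this recursion along the deterministic trajectory $s_{0}=s$, $s_{t+1} = T(s_{t},\mu_{\theta}(s_{t}))$ to obtain $\bigtriangledown_{\theta} V^{\mu_{\theta}}(s) = \sum_{t=0}^{\infty} \gamma^{t} \psi(s_{t})$, and bound $\|\psi\|$ uniformly using the assumed boundedness of $\bigtriangledown_{a} r$, $\bigtriangledown_{a} T$, $\bigtriangledown_{\theta}\mu_{\theta}$, together with the $1/(1-\gamma nc)$ bound on $\|\bigtriangledown_{s'} V^{\mu_{\theta}}\|$ that falls out of the geometric series in Lemma~\ref{lemma1}'s proof; this gives absolute convergence. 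Third, I would write $J(\mu_{\theta}) = \int p_{0}(s) V^{\mu_{\theta}}(s)\,ds$, apply Lebesgue's Dominated Convergence Theorem to exchange $\bigtriangledown_{\theta}$, the outer integral, and the infinite sum, and then rearrange using $p(s,s',t,\mu_{\theta})$ together with the definition of $\rho^{\mu_{\theta}}$ to land on the claimed closed form.

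The main obstacle is the very first step: justifying the differentiation of $V^{\mu_{\theta}}(T(s,\mu_{\theta}(s)))$ with respect to $\theta$ and cleanly isolating the $\bigtriangledown_{s'}V^{\mu_{\theta}}$ factor. This would be impossible without Lemma~\ref{lemma1}, because the deterministic transition removes the standard DPG trick of differentiating under a transition density; in Example~\ref{example} one already sees that $\bigtriangledown_{s}V^{\mu_{\theta}}$ can blow up outside the stated range of $\gamma$. Once existence is secured, the remaining pieces are routine: the unrolled series converges by $\gamma < 1/(nc) < 1$ together with uniform boundedness, and the interchange of limits, sums, and integrals is a textbook LDCT application.
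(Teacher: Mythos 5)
Your proposal is correct and follows essentially the same route as the paper's Appendix B: differentiate the deterministic Bellman identity in $\theta$ via the chain rule (with Lemma~\ref{lemma1} supplying the existence of $\bigtriangledown_{s'}V^{\mu_{\theta}}$), unroll the resulting recursion along the deterministic trajectory, and exchange sums and integrals to express the result through $\rho^{\mu_{\theta}}$. Your explicit uniform bound on $\psi$ is a slightly more careful justification of the convergence of the unrolled series than the paper provides, but it is not a different argument.
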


\section{Deterministic Policy Gradients with general state transitions}
In this section we consider a general setting where the state transition for any state $s$ and any action $a$ is a convex combination of a deterministic transition function $T(s,a)$ with probability $f(s,a)$, and a stochastic probability transition density function $p(s^{'}|s,a)$ with probability $1-f(s,a)$. Note that this setting generalizes that of DPG.
Here, $T$ also satisfies the same condition as in Section \ref{Deter}. We assume that $f(s,a)$, $\bigtriangledown_{s}f(s,a)$ and $\bigtriangledown_{a}f(s,a)$ are continuous and bounded.
By the similar technique to the setting with deterministic state transitions, we get the main theorem which proves the existence of the gradient of $J(\mu_{\theta})$ over $\theta$ for a set of discount factors and proposes two conditions such that for all discount factors the policy gradient exists:

{\bf{Condition A.1}}:  $\max_{s}f(s,\mu_{\theta}(s))\leq \frac{1}{nc}$.

{\bf{Condition A.2}}: For any sequence of states $(s_0,...,s_{t-1})$ and any timestep $t$, the eigenvalues of $\prod_{i=0}^{t-1}f(s_i,\mu_{\theta}(s_i))\bigtriangledown_{s_i}T(s_i,\mu_{\theta}(s_i))$ are in $[-1,1]$.

\begin{theorem}{\bf The GDPG Theorem}
\label{theorem_convex}

For any MDP in the general cases and any policy $\mu_{\theta}$, for any discount factor $\gamma$ in $[0,\frac{1}{nc\max_{s}f(s,\mu_{\theta}(s))})$, the policy gradient exists. If the MDP satisfies {{Condition A.1}} or {{Condition A.2}}, for any discount factor and any policy $\mu_{\theta}$, the policy gradient exists. The form is \begin{equation}
\begin{split}
\label{G_J}
\bigtriangledown_{\theta}J(\mu_{\theta})=&\int_{\mathcal{S}}\rho^{\mu_{\theta}}(s)(\bigtriangledown_{\theta}\mu_{\theta}(s)\bigtriangledown_{a}r(s,a)|_{a=\mu_{\theta}(s)}+\gamma f(s,\mu_{\theta}(s)) \bigtriangledown_{\theta}\mu_{\theta}(s) 
\bigtriangledown_{a} T(s,a)|_{a=\mu_{\theta}(s)}\\& \bigtriangledown_{s^{'}}V^{\mu_{\theta}}(s^{'})|_{s^{'}=T(s,a)}+\gamma (1-f(s,\mu_{\theta}(s)))\int_{\mathcal{S}}^{}\bigtriangledown_{\theta}\mu_{\theta}(s)\bigtriangledown_{a}p(s^{'}|s,a)|_{a=\mu_{\theta}(s)}\\&V^{\mu_{\theta}}(s^{'})ds^{'}+\gamma \bigtriangledown_{\theta}f(s,\mu_{\theta}(s))V^{\mu_{\theta}}(s^{'})|_{s^{'}=T(s,\mu_{\theta}(s))}-\gamma \bigtriangledown_{\theta}f(s,\mu_{\theta}(s))\\&\int_{\mathcal{S}}^{}p(s^{'}|s,a)|_{a=\mu_{\theta}(s)}V^{\mu_{\theta}}(s^{'})ds^{'})ds=\int_{\mathcal{S}}\rho^{\mu_{\theta}}(s)(\bigtriangledown_{\theta}\mu_{\theta}(s)\bigtriangledown_{a}Q^{\mu_{\theta}}(s,a)|_{a=\mu_{\theta}(s)})ds.
\end{split}
\end{equation}

\end{theorem}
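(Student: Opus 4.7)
The plan is to extend the technique of Lemma~\ref{lemma1} to the mixed transition setting. The first task is to establish existence of $\bigtriangledown_s V^{\mu_\theta}(s)$; once that is in hand, existence of the policy gradient and the closed-form expression follow by essentially the same bookkeeping as in Theorem~\ref{theorem_deter}.

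I would begin by writing the Bellman equation in the mixed setting, $V^{\mu_\theta}(s) = r(s,\mu_\theta(s)) + \gamma f(s,\mu_\theta(s))V^{\mu_\theta}(T(s,\mu_\theta(s))) + \gamma(1-f(s,\mu_\theta(s)))\int_{\mathcal{S}}p(s'|s,\mu_\theta(s))V^{\mu_\theta}(s')ds'$, and differentiate both sides with respect to $s$. Only the deterministic branch contributes a $\bigtriangledown_{s'}V^{\mu_\theta}$ term on the right, scaled by the matrix $\gamma f(s,\mu_\theta(s))\bigtriangledown_s T(s,\mu_\theta(s))$. The stochastic branch contributes $\gamma(1-f)\int_{\mathcal{S}}\bigtriangledown_s p(s'|s,\mu_\theta(s))V^{\mu_\theta}(s')ds'$, which is bounded independently of $\bigtriangledown_{s'}V^{\mu_\theta}$ because $p$ has bounded gradient in $s$ (and $a$) while $V^{\mu_\theta}$ is bounded. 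A $\bigtriangledown_s f$ piece also appears, multiplying the difference $V^{\mu_\theta}(T(s,\mu_\theta(s))) - \int_{\mathcal{S}} p V^{\mu_\theta}$, which is again bounded.

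Next I would unroll this recursion infinitely, collecting products of the form $\prod_{i=0}^{t-1}\gamma f(s_i,\mu_\theta(s_i))\bigtriangledown_{s_i}T(s_i,\mu_\theta(s_i))$ along deterministic trajectories. Convergence of the resulting series is the crux of the proof and proceeds by the three-case split used in Lemma~\ref{lemma1}: for each target $s'$, the deterministic orbit from $s$ either never reaches $s'$, reaches it finitely often, or revisits it infinitely often. The first two cases are immediate. For the infinitely-visited case, the max-norm bound $\|f\bigtriangledown_s T\|_{\max}\leq c\max_s f(s,\mu_\theta(s))$ gives the threshold $\gamma < 1/(nc\max_s f(s,\mu_\theta(s)))$ stated in the first part of the theorem. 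Under Condition~A.1 the weight $f$ itself is at most $1/(nc)$, so $(\gamma f\bigtriangledown_s T)^t$ is dominated by a convergent geometric series for every $\gamma\in[0,1)$. Under Condition~A.2 the eigenvalues of the unweighted product matrix lie in $[-1,1]$, and multiplying by $\gamma^t$ with $\gamma<1$ again gives convergence.

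Having shown pointwise convergence, I would invoke Lebesgue's Dominated Convergence Theorem to exchange the infinite sum with the integration over $\mathcal{S}$, yielding a closed form for $\bigtriangledown_s V^{\mu_\theta}(s)$. Then $\bigtriangledown_\theta V^{\mu_\theta}(s)$ is obtained by differentiating the mixed Bellman equation directly with respect to $\theta$: this produces the five groups of terms in (\ref{G_J})---one each for $\bigtriangledown_a r$, the deterministic-branch $\bigtriangledown_a T\,\bigtriangledown_{s'}V$, the stochastic-branch $\bigtriangledown_a p\cdot V$, and the two $\bigtriangledown_\theta f$ contributions multiplying $V(T(s,\mu_\theta(s)))$ and $\int p V$ respectively. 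Integrating against $p_0$ and rolling up the discounted visitation measure $\rho^{\mu_\theta}$ produces the explicit formula, and collapsing the five terms back using the Bellman decomposition of $Q^{\mu_\theta}$ in the mixed setting recovers the compact form $\int_{\mathcal{S}}\rho^{\mu_\theta}(s)\bigtriangledown_\theta\mu_\theta(s)\bigtriangledown_a Q^{\mu_\theta}(s,a)|_{a=\mu_\theta(s)}ds$ on the final line of the theorem.

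The hardest step is convergence of the Jacobian-product series in the \emph{visited infinitely often} case: this is where the mixture weight $f$ becomes essential and where Conditions~A.1 and~A.2 earn their keep, by uniformly bounding (respectively) the max-norm and the spectral radius of the effective deterministic transition operator along trajectories of arbitrary length. The remainder of the argument is largely an elaboration of the bookkeeping already used in Lemma~\ref{lemma1} and Theorem~\ref{theorem_deter}, now tracking the extra $f$-weighted branches introduced by the mixture.
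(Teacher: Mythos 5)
Your proposal follows essentially the same route as the paper's own proof in Appendix C: differentiate the mixed Bellman equation in $s$, unroll to collect the products $\prod_{i}\gamma f(s_i,\mu_\theta(s_i))\bigtriangledown_{s_i}T(s_i,\mu_\theta(s_i))$ along deterministic trajectories, handle convergence via the three-case split of Lemma~\ref{lemma1} (with the threshold $\gamma<1/(nc\max_s f)$ and the two conditions disposed of exactly as you describe), apply dominated convergence, and then differentiate in $\theta$ and roll up into $\rho^{\mu_\theta}$ to obtain (\ref{G_J}) and its collapse to the $\bigtriangledown_a Q^{\mu_\theta}$ form. No substantive differences.
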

The proof is referred to Appendix C. It is interesting to note that the form is the same as the form of gradient of DPG.   In fact, the assumption of the condition A.1 and A.2 would become weaker when the probability of the deterministic state transition becomes lower. In the extreme case, i.e., the stochastic case, where the probability is zero, the policy gradient exists without any assumption as in \cite{silver2014deterministic}. In fact, the form of the policy gradient is the same in settings of the deterministic state transition and the general case. However, given an estimator of the value function, the complexity of calculating the gradient of these two cases is different. By comparing (\ref{close_j}) with (\ref{G_J}), we get that it is the more computationally expensive for the gradient of the general case than the deterministic case. The gradient of deterministic state transitions only involves $\bigtriangledown_{\theta}r(s,\mu_{\theta}(s))$ and $\bigtriangledown_{s^{'}}V^{\mu_{\theta}}(s^{'})$, while the gradient of the general case introduces additional integration on the state space.

\subsection{Direct Model-based Extension of DDPG}
As discussed before, even for the environment with low-dimensional state space, the sample complexity of DDPG is significantly high for the general case,  which may limit the capability of the model-free algorithms due to slow convergence.
Thus, we consider a model-based augmented MDP $\mathcal{M_*}$ of the original MDP $\mathcal{M}$ with the same reward function, while the state transition function is defined as the expectation of the distribution of the next state of the original MDP, i.e., $T_{*}(s,a)= \mathbb{E}[s^{'}|s,a]$.
$\mathcal{M_*}$ is easier to solve  as the state transition of $\mathcal{M^*}$ is deterministic.
Note that if the environment is indeed deterministic, $\mathcal{M_*}=\mathcal{M}$. Now we define a direct model-based extension of DDPG, called MDPG.
MDPG directly uses the gradient of the long-term rewards of $\mathcal{M_*}$ with policy $\mu_{\theta}$ to improve the policy instead of the deterministic policy gradient, i.e.,
$
 \bigtriangledown_{\theta}J_{*}(\mu_{\theta})= \mu_{\theta}(s)\bigtriangledown_{a}Q_{*}^{\mu_{\theta}}(s,a),
$
where $Q_{*}^{\mu_{\theta}}(s,a)$ denotes the action value function of the augmented MDP. However, it is hard to represent the transition dynamics in complex environments, and it may cause the policy to move to a wrong direction as shown in Section 5.2 on problems with large state space.

\subsection{The GDPG Algorithm}
\begin{algorithm}[!htbp]
\small
  \label{alg:GDPG}
  \caption{GDPG algorithm}
  Initialize a positive weight $\alpha$\\
  Initialize the transition network $T(s,a|{\theta}^{T})$ with random weights ${\theta}^{T}$\\
  Initialize the original and augmented critic networks $Q(s,a|{\theta}_{}^{Q})$, $Q_*(s,a|{\theta}^{Q_*})$ with random weights ${\theta}_{}^{Q}$, ${\theta}^{Q_*}$\\
  Initialize the actor network $\mu(s|{\theta}^{\mu})$ with random weights ${\theta}^{\mu}$\\
  Initialize the target networks  $Q_{}^{'}$, ${Q_*}^{'}$ and $\mu^{'}$ with weights ${\theta}_{}^{Q^{'}}={\theta}^{Q}, {\theta}^{Q_*^{'}}={\theta}^{Q_*},
  {\theta}^{{\mu}^{'}}={\theta}^{\mu}$\\
  Initialize Experience Replay buffer $\mathcal{B}$ \\
  \For{episode$=0, ..., N-1$} {
     Initialize a random process $\mathcal{N}$ for action exploration \\
     Receive initial observation state $s_0$.\\
    \For{$t=1, ..., T$} {
		Select action $a_t=\mu_(s_t|{\theta}^{\mu})+{\mathcal{N}}_t$ according to the current policy and exploration noise\\
		Execute action $a_t$, observe reward $r_t$ and new state $s_{t+1}$, and store transition $(s_t,a_t,r_t,s_{t+1})$ in $\mathcal{B}$\\
		Sample a random minibatch of $N$ transitions $(s_i,a_i,r_i,s_{i+1})$ from $\mathcal{B}$\\
		Set $y_i=r_i+\gamma Q_{}^{'}(s_{i+1},{\mu}^{'}(s_{i+1}|{\theta}^{{\mu}^{'}})|{\theta}_{}^{Q^{'}})$\\
		Update the critic $Q$ by minimizing the loss: $L_1=\frac{1}{N}\sum_{i}^{}{(y_i-Q_{}(s_i,a_i|{\theta}^{Q}))}^{2}$\\
		Set $y_i^{'}=r_i+\gamma  Q_{*}^{'}(T(s_i,a_i|{\theta}^{T}) ,{\mu}^{'}(T(s_i,a_i|{\theta}^{T})|{\theta}^{{\mu}^{'}})|{\theta}^{Q_{*}^{'}})$\\
		Update the augmented critic $Q_*$ by minimizing the loss: $L_2=\frac{1}{N}\sum_{i}^{}{(y_i^{'}-Q_{*}(s_i,a_i|{\theta}^{Q_*}))}^{2}$\\
		Upate the transition $T$ by minimizing the loss: $L_3=\frac{1}{N}\sum_{i}^{}{(s_{i+1}-T(s_i,a_i|{\theta}^{T}))}^{2}$\\
		Update the actor by the sampled policy gradient and target networks:
		\begin{equation*}
		\begin{split}
		\bigtriangledown_{\theta^{\mu}}J(\theta^{\mu})=&\frac{1}{N}\sum_{i}^{}(1-\alpha) \bigtriangledown_{{\theta}^{\mu}}\mu(s|{\theta}^{\mu}) \bigtriangledown_{a}Q_{*}(s,a|{\theta}_{}^{Q_*}) + \alpha \bigtriangledown_{{\theta}^{\mu}}\mu(s|{\theta}^{\mu}) \bigtriangledown_{a}Q_{}(s,a|{\theta}_{}^{Q})
		\end{split}
		\end{equation*}
		
		$${\theta}_{}^{Q^{'}}=\tau {\theta}_{}^{Q}+(1-\tau) {\theta}_{}^{Q^{'}}; {\theta}_{}^{Q_*^{'}}=\tau {\theta}_{}^{Q_*}+(1-\tau) {\theta}_{}^{Q_*^{'}}; {\theta}^{\mu^{'}}=\tau {\theta}^{\mu}+(1-\tau) {\theta}^{\mu^{'}}$$
    }
  }

\end{algorithm}

On the one hand, only solving the model-based augmented MDP may be too myopic.
On the other hand, the model-free algorithm suffers from high sample complexity as mentioned.
Consequently, we consider a program that maximizes the long-term rewards of the augmented MDP, with the constraint being that the long-term rewards of the augmented MDP is less than the original MDP, i.e.,

\begin{equation}
\label{program}
 \max_{\theta} J_{*}(\mu_{\theta}), \ s.t.  J_{*}(\mu_{\theta}) \leq J(\mu_{\theta}).
\end{equation}

It is easy to check that the optimum of this program is less than $\max_{\theta} J(\mu_{\theta})$, and it serves as a lower bound of the long-term rewards of the original MDP. 
The intuition of this program is to optimize a model-based objective which is easier to solve and the improvement of the new objective guarantees the improvement of the original objective. 

If the value function is convex in states \footnote{The value functions of Linear Quadratic Regulation \cite{bradtke1993reinforcement} and Linearly-solvable Markov Decision Process \cite{todorov2007linearly} are indeed convex.}, the long-term rewards of $\mathcal{M_*}$ with policy $\mu_{\theta}$, $J_{*}(\mu_{\theta})$ is no larger than the long-term rewards of $\mathcal{M}$, as illustrated in Theorem \ref{Lower bound}. 
That is, the program turns into a problem that maximizes the model-based objective.  The proof is referred to Appendix D.
\begin{theorem}
\label{Lower bound}
If \ $V^{\mu_{\theta}}(s)$ is convex in $s$, $J(\mu_{\theta})\geq J_{*}(\mu_{\theta}).$
\end{theorem}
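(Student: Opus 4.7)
The plan is to reduce the claim to Jensen's inequality combined with the monotonicity of the Bellman operator of the augmented MDP $\mathcal{M}_{*}$. First I would write the Bellman equation for $V^{\mu_{\theta}}$ in the original MDP,
\[
V^{\mu_{\theta}}(s) \;=\; r(s,\mu_{\theta}(s)) \,+\, \gamma\,\mathbb{E}_{s'\sim p(\cdot\mid s,\mu_{\theta}(s))}\bigl[V^{\mu_{\theta}}(s')\bigr],
\]
and exploit the defining identity $T_{*}(s,a)=\mathbb{E}[s'\mid s,a]$, which says that the deterministic transition of $\mathcal{M}_{*}$ is precisely the conditional mean of the next state under $\mathcal{M}$. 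Since $V^{\mu_{\theta}}$ is assumed convex in $s$, Jensen's inequality applied inside this conditional expectation immediately yields the pointwise supersolution relation
\[
V^{\mu_{\theta}}(s) \;\geq\; r(s,\mu_{\theta}(s)) \,+\, \gamma\,V^{\mu_{\theta}}\bigl(T_{*}(s,\mu_{\theta}(s))\bigr).
\]

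I would then introduce the augmented Bellman operator $(\mathcal{T}_{*} U)(s) := r(s,\mu_{\theta}(s)) + \gamma\, U(T_{*}(s,\mu_{\theta}(s)))$, whose unique bounded fixed point is exactly $V_{*}^{\mu_{\theta}}$. The operator $\mathcal{T}_{*}$ is pointwise monotone and a $\gamma$-contraction in the sup norm, so the supersolution inequality $V^{\mu_{\theta}}\geq \mathcal{T}_{*}V^{\mu_{\theta}}$ may be iterated via monotonicity to give
\[
V^{\mu_{\theta}} \;\geq\; \mathcal{T}_{*}V^{\mu_{\theta}} \;\geq\; \mathcal{T}_{*}^{2}V^{\mu_{\theta}} \;\geq\; \cdots \;\geq\; \mathcal{T}_{*}^{n}V^{\mu_{\theta}} \;\longrightarrow\; V_{*}^{\mu_{\theta}}.
\]
Hence $V^{\mu_{\theta}}(s)\geq V_{*}^{\mu_{\theta}}(s)$ pointwise, and integrating against the initial distribution $p_0$ delivers $J(\mu_{\theta})\geq J_{*}(\mu_{\theta})$.

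The main point to be careful about is the validity of the fixed-point limit: I need $V^{\mu_{\theta}}$ to be bounded and measurable so that the contraction argument on bounded functions applies, which follows from the standing boundedness assumption on $r$ together with $\gamma<1$. It is worth noting that the convexity hypothesis enters exactly once, through the Jensen step; apart from that the argument is a standard comparison principle, so I do not anticipate any deep obstacle beyond verifying these mild regularity conditions and making precise the passage from the one-step inequality to the infinite-horizon comparison.
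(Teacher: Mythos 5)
Your proposal is correct and follows essentially the same route as the paper's own proof: Jensen's inequality applied to the convex value function at the conditional-mean transition $T_{*}(s,a)=\mathbb{E}[s'\mid s,a]$ gives the one-step supersolution inequality, and then the comparison is propagated to the infinite horizon using the discount factor. The paper carries out your operator-iteration step by hand --- it subtracts the fixed-point equation for $V_{*}^{\mu_{\theta}}$, notes that $V^{\mu_{\theta}}-V_{*}^{\mu_{\theta}}$ is bounded below by some constant $C$, and iterates the inequality $V^{\mu_{\theta}}(s)-V_{*}^{\mu_{\theta}}(s)\geq\gamma\,(V^{\mu_{\theta}}(s')-V_{*}^{\mu_{\theta}}(s'))$ to get $\geq\gamma^{n}C\to 0$ --- which is exactly your monotone-contraction argument in different notation, so the two proofs coincide in substance.
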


In the other case that the value function is not convex, it is hard to solve the program directly. 
Therefore, we choose to optimize its Lagrangian dual program, 

\begin{equation}
\label{dual}
\min_{\alpha\geq 0}\max_{\theta}J_{*}(\mu_{\theta})+\alpha(J(\mu_{\theta})-J_{*}(\mu_{\theta})).
\end{equation}

Then for each choice of $\alpha$, we use the gradient of $J_{*}(\mu_{\theta})+\alpha(J(\mu_{\theta})-J_{*}(\mu_{\theta}))$, i.e.,

\begin{equation}
\label{dual1}
(1-\alpha) \mu_{\theta}(s)\bigtriangledown_{a}Q_{*}^{\mu_{\theta}}(s,a) + \alpha\mu_{\theta}(s)\bigtriangledown_{a}Q^{\mu_{\theta}}(s,a),
\end{equation}

which generalizes the gradient of the DDPG algorithm, to improve the policy by stochastic gradient ascent, where $Q_{*}^{\mu_{\theta}}(s,a)$ denotes the action value function of the augmented MDP.
However, the estimation of the value function of the augmented MDP relies on the expectation of the distribution of the next state, which is unknown.
To overcome this challenge, we follow the idea of \cite{nagabandi2017neural}, where neural networks are applied to predict the next state. Different from \cite{nagabandi2017neural} where they take model predictive control as the control policy, we apply the estimators of state transitions to estimate the action-value function of the augmented MDP. We now propose the Generalized Deterministic Policy Gradient (GDPG) algorithm, as shown in Algorithm \ref{alg:GDPG}. Apart from training the actor and the critic, we also train a transition network $T$ which predicts the next state.

\section{Experiments}
In this section, we design a series of experiments to evaluate GDPG.
We aim to investigate the following questions:
(1) How does the value of $\alpha$ affect the performance on a toy problem with general state transitions? \label{item:alpha}
(2) How does GDPG compare with DDPG, MDPG, and other state-of-the-art methods on continuous control benchmarks? We first illustrate the influence of the weight $\alpha$ in a toy environment, ComplexPoint-v0 with general state transitions. Then we evaluate GDPG in a number of continuous control benchmark tasks in OpenAI Gym \cite{1606.01540}, including a classic control problem \cite{moore1990efficient} and a task in the Box2D and MuJoCo \cite{todorov2012mujoco} simulator. The details of our benchmarks are referred to Appendix E. We compare GDPG with the following baselines: (a) DDPG,  (b) MDPG, (c) TRPO, (d) ACKTR.
For the experiments, we run each algorithm 1M steps on each environment over 5 random seeds. Note that the configuration of GDPG is the same as that of DDPG of except for the transition network. Full configuration is referred to Appendix E.  We use the averaged return of previous 100 episodes as the performance metric.

\subsection{The ablation study of GDPG}

To better understand the effect of $\alpha$ in the dual function, we evaluate GDPG with five different choices of the weight $\alpha=0, 0.25, 0.5, 0.75, 1, 2$ in  ComplexPoint-v0.  Figure \ref{fig: toy_env}(a) shows a snapshot of this environment, where the state is the coordinates of the agent in the 5D space while the feasible action set is $[-0.1, 0.1]^5$. The state transition is a convex combination of the deterministic transition $T(s,a)=s+a$ with probability $f(s,a)$, and uniform distribution $[-1,1]^{5}$ with probability $1-f(s,a)$, where $f(s,a)=||a||_2^{2} / 0.05$. The reward function is $r(s,a)=-||s+a||_2$, i.e., the distance between the agent and the origin. The task is terminated either when the agent enters the termination area or the number of steps exceeds a threshold of 100 steps.  Figure \ref{fig: toy_env}(b) shows the performance comparison, and Figure \ref{fig: toy_env}(c) and Figure \ref{fig: toy_env}(d) correspond to its earlier stage and convergence stage, which illustrates convergence and performance more clearly.
As shown, for $\alpha=1$, which indeed corresponds to DDPG, results in a bad performance and slow convergence.
The slow convergence attributes to the computation complexity of gradient in this environment.
For $\alpha=0$, the goal corresponds to optimize the augmented MDP, which performs better than DDPG as it efficiently reduces sample complexity.
However, it is too myopic as it solely focuses on the augmented MDP, which may deviate from the original objective and limit its performance.
We observe that the best performance is achieved when $\alpha=0.5$. We can view the weighted objective as a convex combination of the model-free objective and the model-based objective when $\alpha \in [0,1]$. $\alpha$ trades-off between the convergence and the performance. A large $\alpha$ may introduce bias while a small $\alpha$ may suffer from sample complexity. Note that the choice of $2$ for the value of $\alpha$ achieves the worst performance. Recall (\ref{dual1}), the reason is that setting a value of $\alpha$ larger than 1 may lead the gradient of the policy to a totally opposite direction and induce large variance of the policy gradient.

\begin{figure}
    \centering
    \subfigure[The ComplexPoint environment.]{
        \begin{minipage}[t]{0.32\linewidth}
            \centering
            \includegraphics[scale=0.3]{./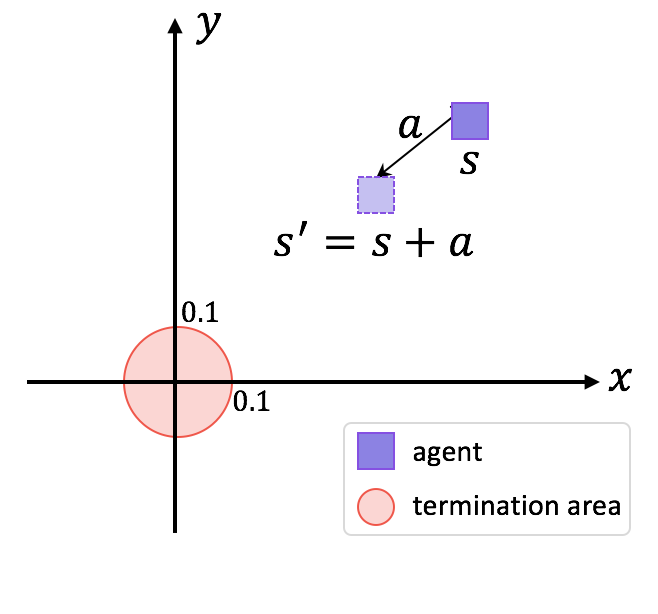}
        \end{minipage}
    }%
    \subfigure[Effect of $\alpha$.]{
        \begin{minipage}[t]{0.32\linewidth}
            \centering
            \includegraphics[scale=0.3]{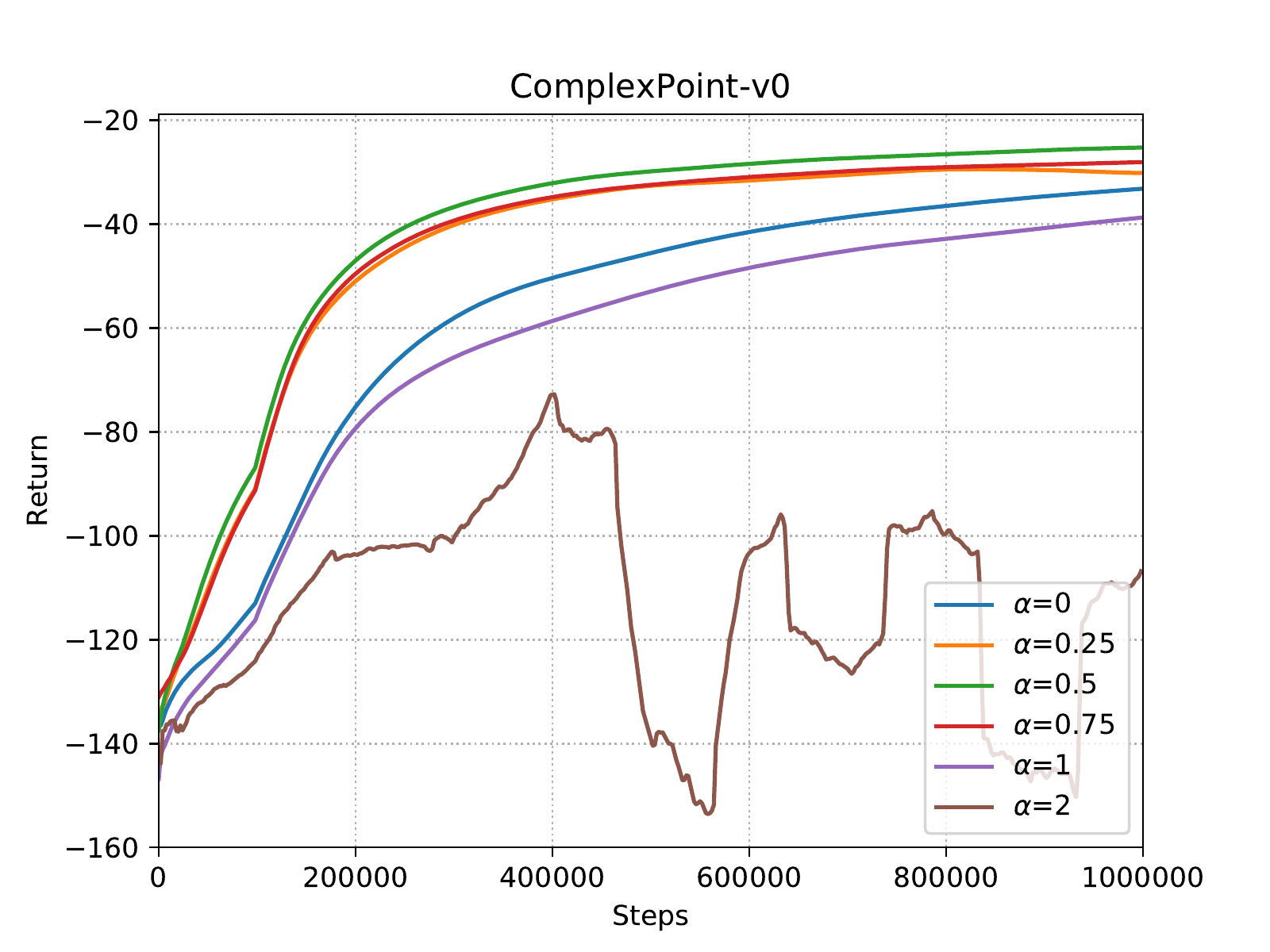}
        \end{minipage}
    }%

    \subfigure[Earlier stage.]{
        \begin{minipage}[t]{0.35\linewidth}
            \centering
            \includegraphics[scale=0.3]{./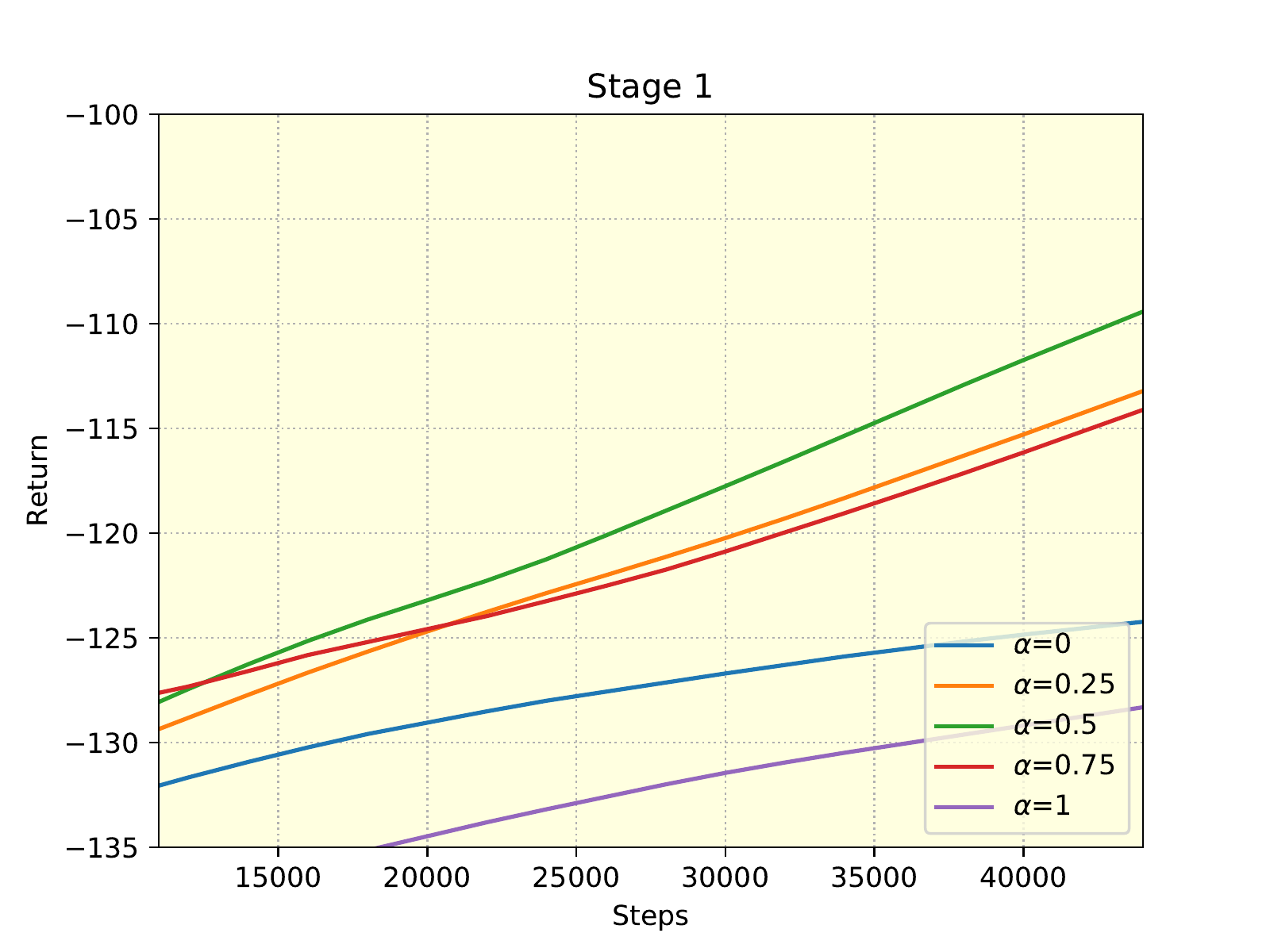}
        \end{minipage}
    }%
    \subfigure[Convergence stage.]{
        \begin{minipage}[t]{0.32\linewidth}
            \centering
            \includegraphics[scale=0.3]{./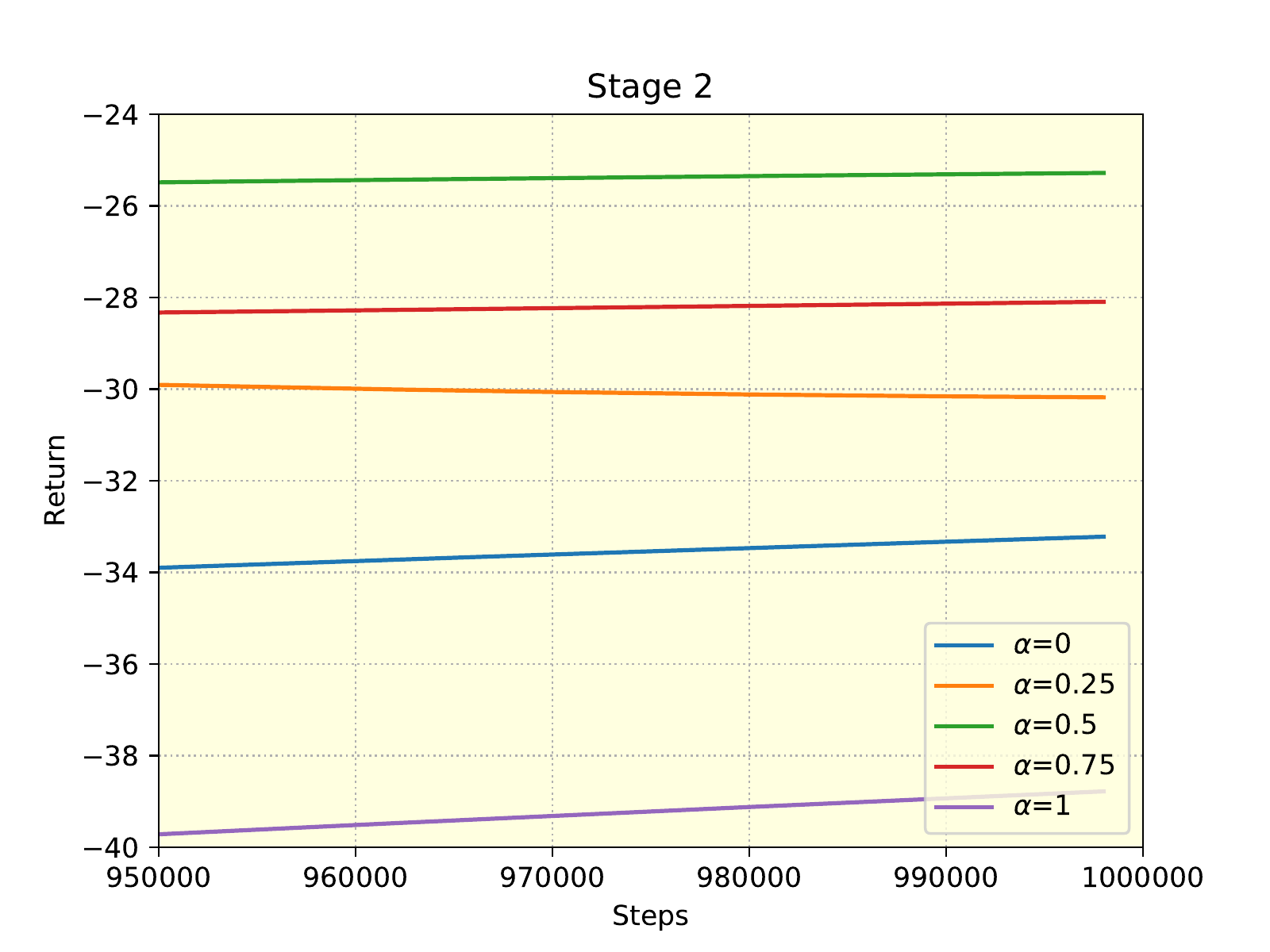}
        \end{minipage}
    }
    \caption{Return/steps of training on algorithms}
    \label{fig: toy_env}
\end{figure}

\subsection{Performance comparison with baselines on continuous control benchmarks}
We now present and discuss the findings from our experiments on several continuous control tasks, all of which are standard benchmark defined in OpenAI Gym \cite{1606.01540}.
Tasks range from low-dimensional input space to high-dimensional input space.
For the baselines algorithms, we use the implementation from OpenAI Baselines \cite{baselines}.
Figure \ref{fig: mujoco2}, \ref{fig: mujoco3} and \ref{fig: mujoco4} show the sample mean and the standard deviation of the averaged returns in each environment.
As shown in Figure \ref{fig: mujoco2}, GDPG outperforms other baselines in tasks with low-dimensional input space including a classic continuous control task and a task simulated by Box2D.
From Figure \ref{fig: mujoco3} and \ref{fig: mujoco4}, we observe that GDPG outperforms high-dimensional tasks simulated by MuJoCo by a large margin, especially in Swimmer-v2, HalfCheetah-v2, and Humanoid-v2. This demonstrates that GDPG combines the model-based augmented MDP and the original MDP efficiently. 
Note that the direct model-based extension of DDPG, MDPG performs the worst in all environments except Swimmer-v2.
 It shows that the model-based technique can not solve complex settings like MuJoCo as it is hard to represent the transition dynamics.

\begin{figure}
    \centering
     \subfigure[Pendulum-v0.]{
        \begin{minipage}[t]{0.45\linewidth}
            \centering
            \includegraphics[scale=0.35]{./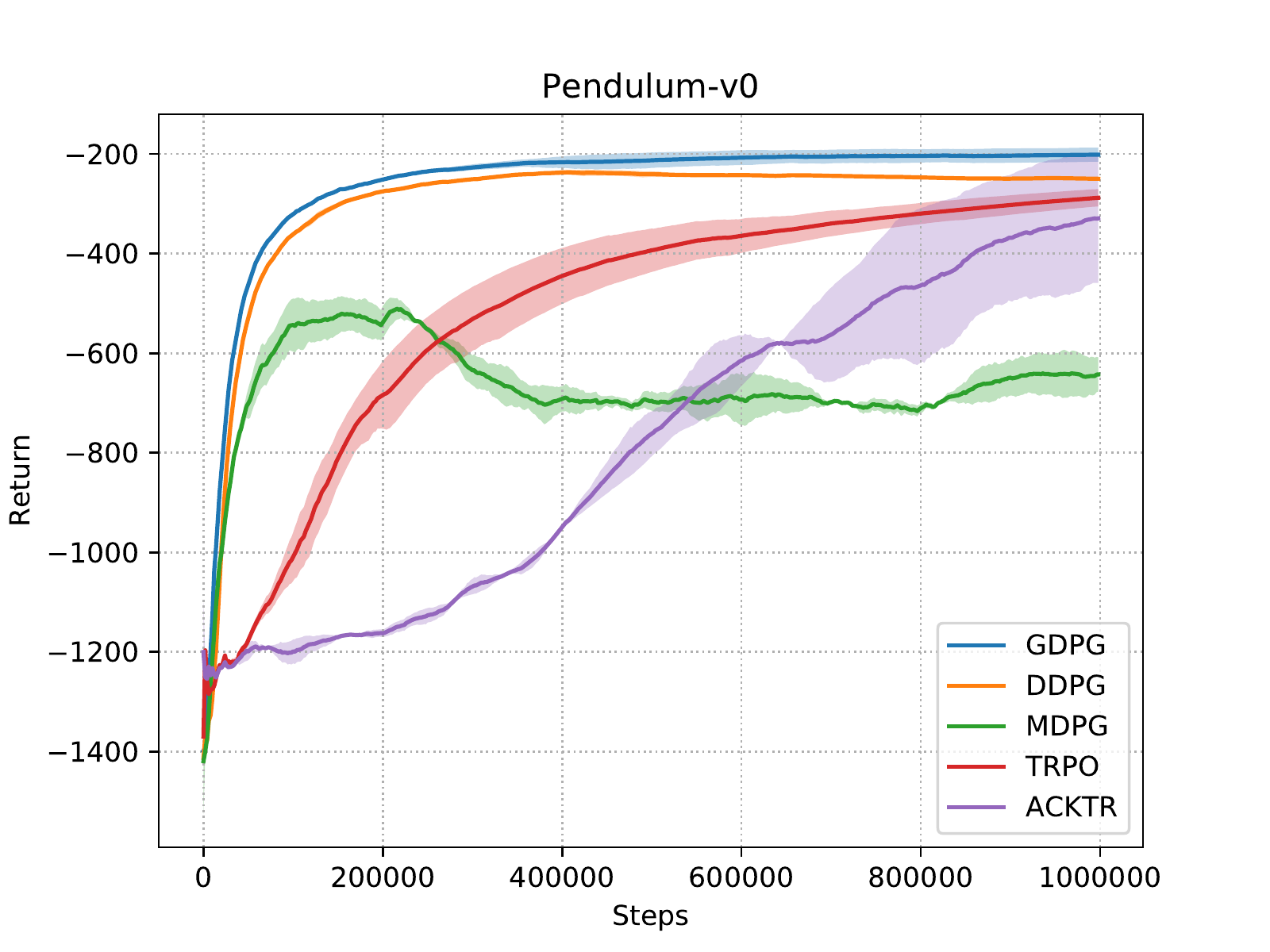}
        \end{minipage}
    }%
    \subfigure[LunarLander-v2.]{
        \begin{minipage}[t]{0.45\linewidth}
            \centering
            \includegraphics[scale=0.35]{./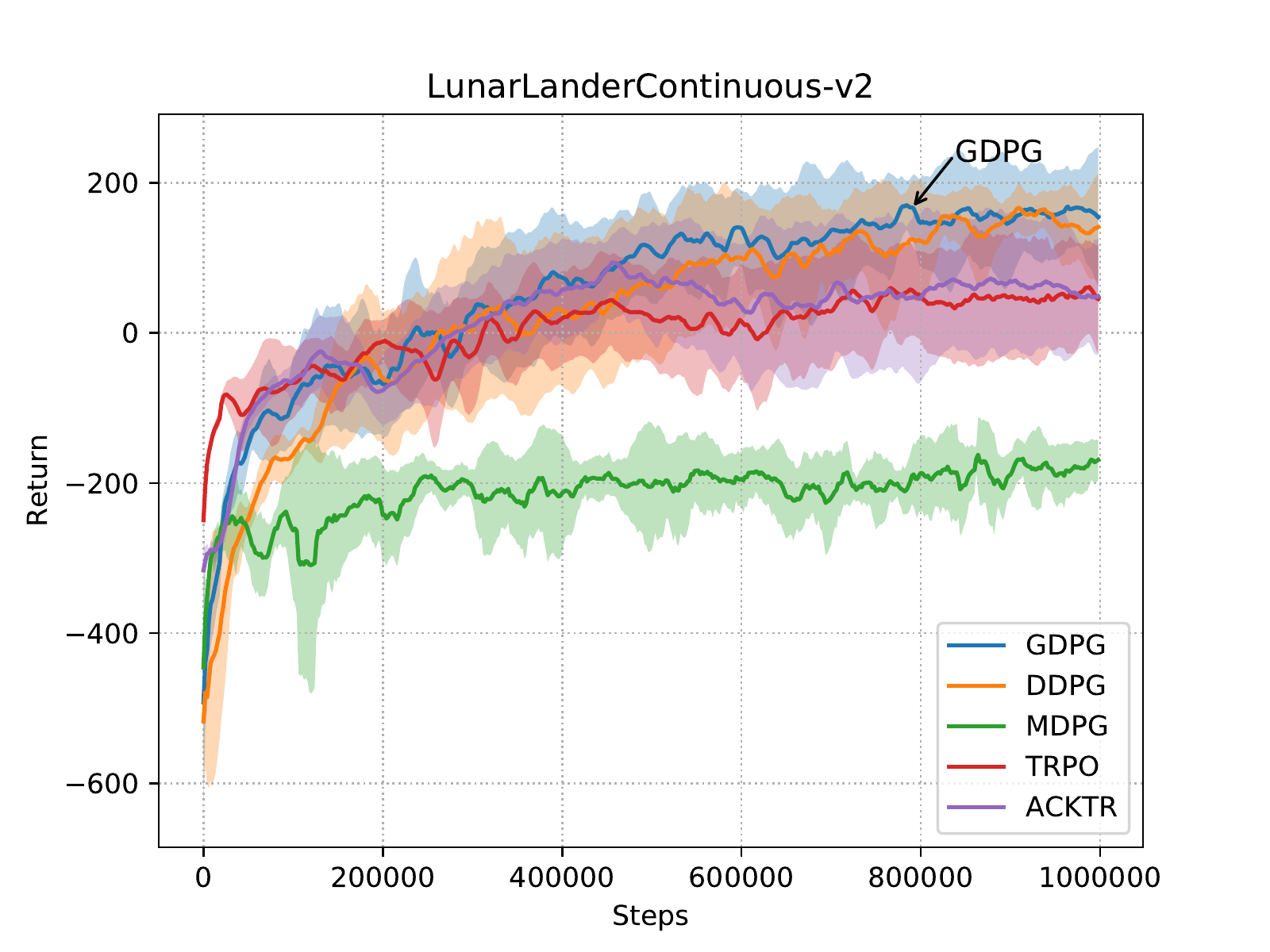}
        \end{minipage}
    }%
 
    \caption{Return/steps of training on environments from the MuJoCo simulator.}
    \label{fig: mujoco2}
\end{figure}

\begin{figure}
    \centering
    
    \subfigure[Swimmer-v2.]{
        \begin{minipage}[t]{0.45\linewidth}
            \centering
            \includegraphics[scale=0.35]{./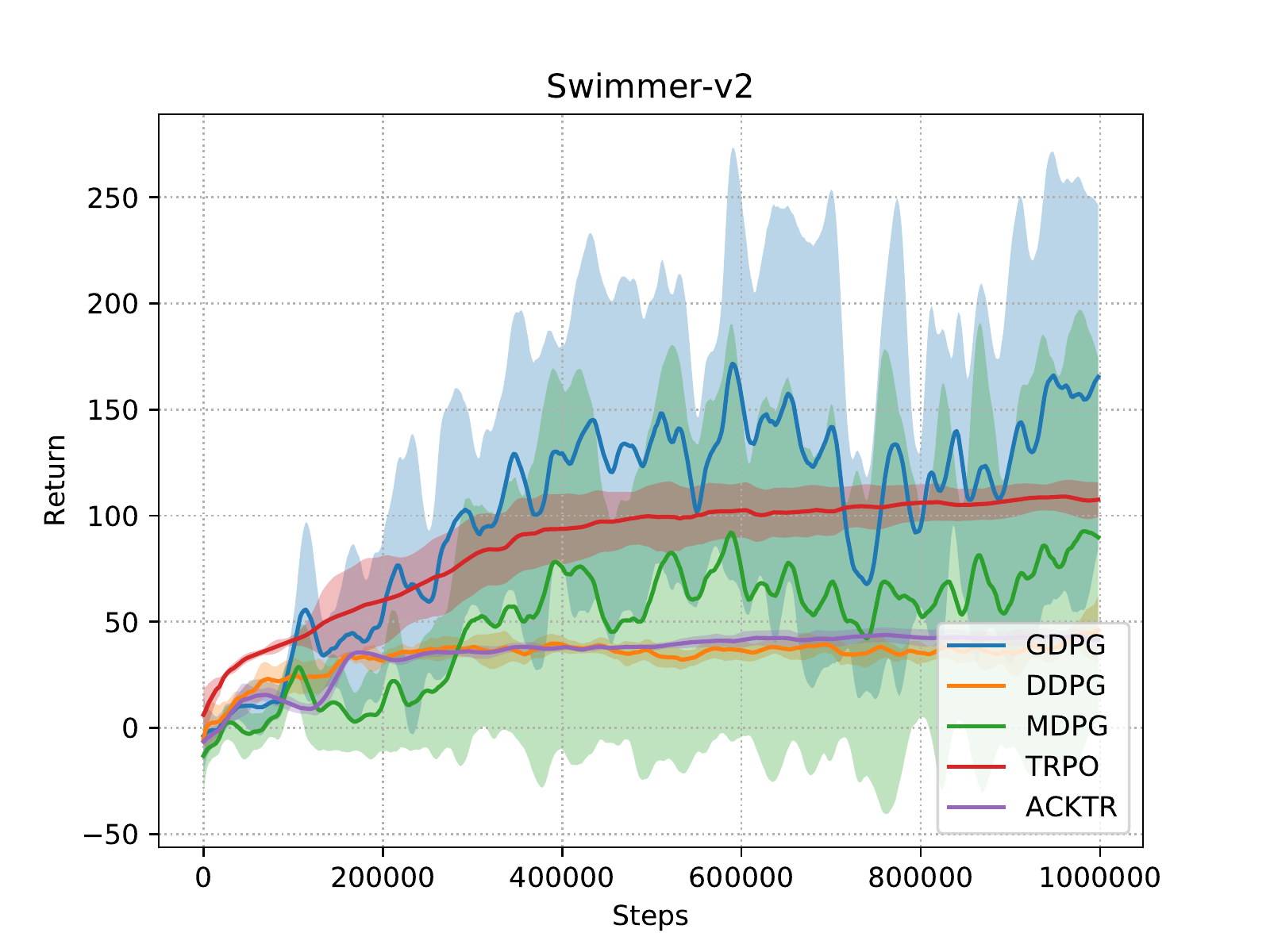}
        \end{minipage}
    }
    \subfigure[HalfCheetah-v2.]{
        \begin{minipage}[t]{0.45\linewidth}
            \centering
            \includegraphics[scale=0.35]{./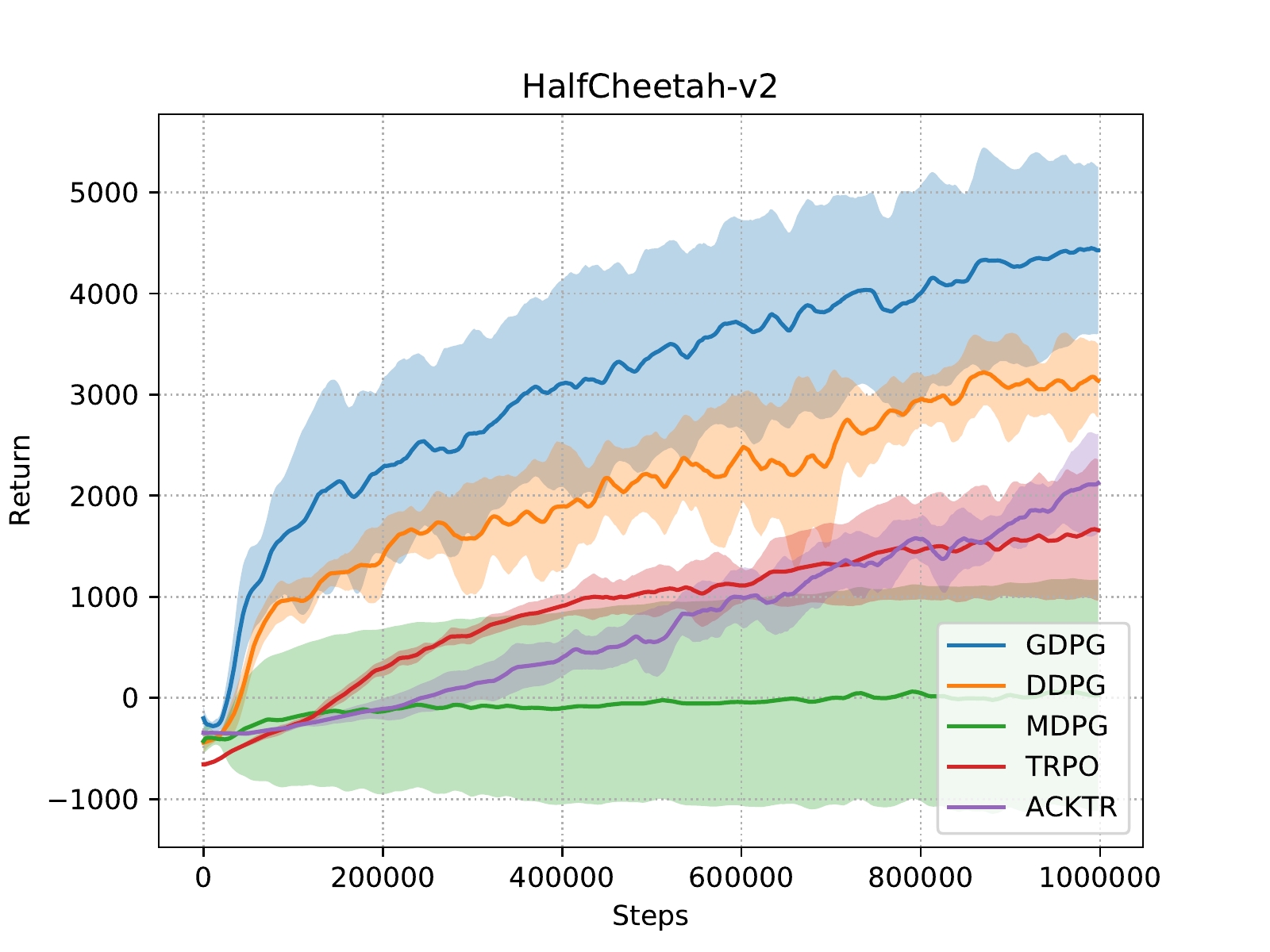}
        \end{minipage}
    }%
    
    \caption{Return/steps of training on environments from the MuJoCo simulator.}
    \label{fig: mujoco3}
\end{figure}

\begin{figure}
    \centering
    
    \subfigure[HumanoidStandup-v2.]{
        \begin{minipage}[t]{0.45\linewidth}
            \centering
            \includegraphics[scale=0.35]{./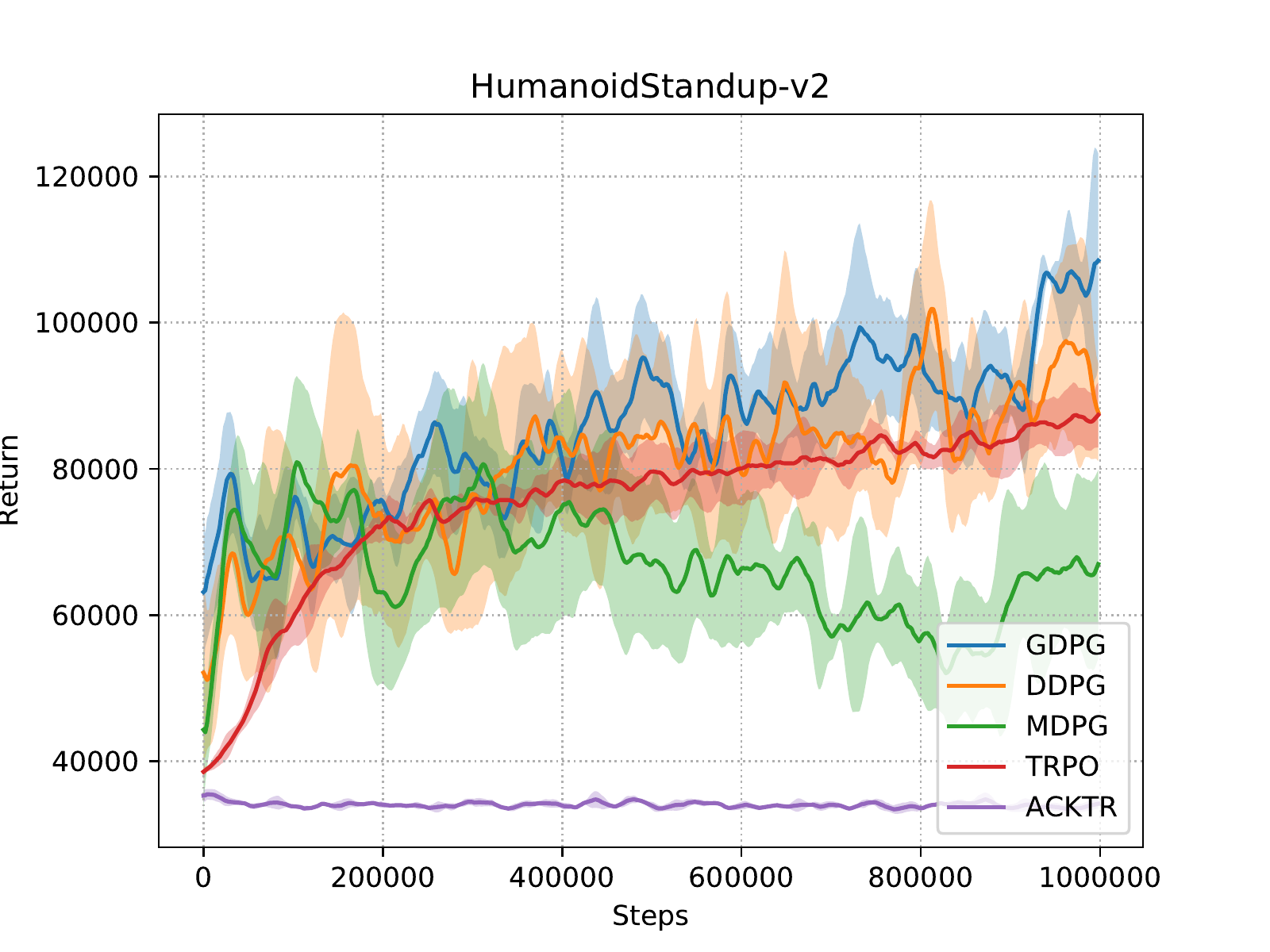}
        \end{minipage}
    }%
    \subfigure[Humanoid-v2.]{
        \begin{minipage}[t]{0.45\linewidth}
            \centering
            \includegraphics[scale=0.35]{./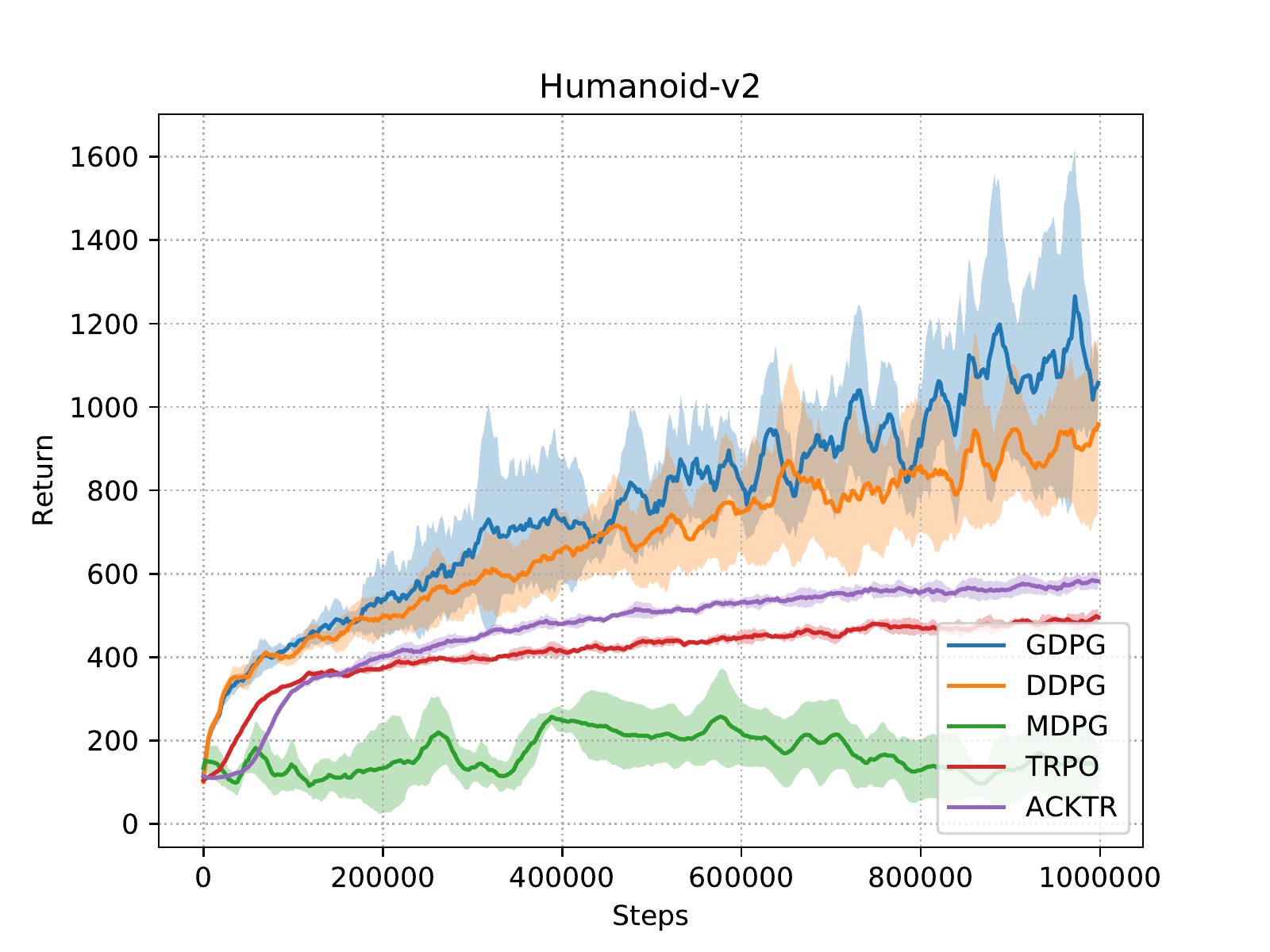}
        \end{minipage}
    }
	    
    \caption{Return/steps of training on environments from the MuJoCo simulator.}
    \label{fig: mujoco4}
\end{figure}

\section{Related Work}

Model-based algorithms has been widely studied \cite{koutnik2013evolving,lioutikov2014sample,moldovan2015optimism,montgomery2016guided} in recent years.
Iterative LQG \cite{li2004iterative} applies model-based methods and assumes a specific form of both transition dynamics and the value function while \cite{sutton1990integrated, gu2016continuous, kurutach2018model} generate synthetic samples by the learned model.
Different from traditional model-based methods, we optimize the dual function that involves the model-based augmented MDP and the original MDP.
Perhaps the most related model-based approach to our work is PILCO \cite{deisenroth2011pilco}, which learns the transition model by Gaussian processes. With the non-parametric transition model, \cite{deisenroth2011pilco} applies policy improvement on analytic policy gradients. However this method does not scale well to nonlinear transition dynamics or high-dimensional state spaces.
Different from \cite{deisenroth2011pilco}, we do not rely on assumptions of the transition model.

\section{Conclusion}
Most existing works on policy gradient assume stochastic state transitions, while most realistic settings often involve deterministic state transitions. 
In this paper, we study a setting with a general state transition that is a convex combination of a stochastic continuous function and a deterministic discontinuous function. 
We prove the existence of the deterministic policy gradient for a certain set of discount factors. 
We propose the GDPG algorithm to reduce the sample complexity of the deterministic policy gradient. 
GDPG solves a program that maximizes the long-terms rewards of the model-based augmented MDP with the constraint that the objective serves as the lower bound of the original MDP.
We compare GDPG with MDPG and state-of-the-art algorithms on several continuous control benchmarks.
Results show that GDPG substantially outperforms other baselines in terms of convergence and long-term rewards. 
For future work, how to address the optimal weight in the dual program remains to be studied.  It is worth studying whether the deterministic policy gradient exists in more general settings that involve multiple deterministic state transitions. Last but not least, it is promising to apply the model-based technique presented in this paper to other model-free algorithms.

\bibliographystyle{abbrv}
\bibliography{gdpg}

\appendix
\section{Proof of Lemma 1}

\begin{proof}
Recall the definition of $V^{\mu_{\theta}}(s)$, we have
\begin{equation}
V^{\mu_{\theta}}(s)=Q^{\mu_{\theta}}(s,\mu_{\theta}(s))=r(s,\mu_{\theta}(s))+\gamma V^{\mu_{\theta}}(s^{'})|_{s^{'}=T(s,\mu_{\theta}(s))}).
\end{equation}
\begin{equation}
\label{v_s1}
\begin{split}
\bigtriangledown_{s}V^{\mu_{\theta}}(s)&=\bigtriangledown_{s}r(s,\mu_{\theta}(s))+\gamma \bigtriangledown_{s}T(s,\mu_{\theta}(s)) \bigtriangledown_{s^{'}}V^{\mu_{\theta}}(s^{'})|_{s^{'}=T(s,\mu_{\theta}(s))}.
\end{split}
\end{equation}
By unrolling (\ref{v_s1}) with infinite steps, we get
\begin{equation}
\bigtriangledown_{s}V^{\mu_{\theta}}(s)=\sum_{t=0}^{\infty}\int_{\mathcal{S}}{\gamma}^{t}g(s,t,\mu_{\theta})I(s,s^{'},t,\mu_{\theta})\bigtriangledown_{s^{'}}r(s^{'},\mu_{\theta}(s^{'}))ds^{'},
\end{equation}
where $I(s,s^{'},t,\mu_{\theta})$ is an indicator function that indicates whether $s^{'}$ is obtained after $t$ steps from the state $s$ following the policy $\mu_{\theta}$.
Here, $g(s,t,\mu_{\theta})=\prod_{i=0}^{t-1}\bigtriangledown_{s_i}T(s_i,\mu_{\theta}(s_i)), $where $s_0=s$ and $s_i$ is the state after $i$ steps following policy $\mu_{\theta}$.
The state transitions and policies are both deterministic. 
We now prove that for any $\mu_{\theta},s,s^{'}$ and any discount factor $\gamma \in [0,\frac{1}{nc})$ such that 
$\sum_{t=0}^{\infty}{\gamma}^{t}g(s,t,\mu_{\theta})I(s,s^{'},t,\mu_{\theta})$ converges. 

For each state $s'$, which is reached from the initial state $s$ with infinite steps, there are three cases due to deterministic state transitions, as analyzed below:

\begin{enumerate}
\item Never visited: $\sum_{t=0}^{\infty}{\gamma}^{t}g(s,t,\mu_{\theta})I(s,s^{'},t,\mu_{\theta})=\mathbf{0}.$
\item Visited once: Let $t_{s'}$ denote the number of steps that it takes to reach the state $s'$, then 
$\sum_{t=0}^{\infty}{\gamma}^{t}g(s,t,\mu_{\theta})I(s,s^{'},t,\mu_{\theta})={\gamma}^{t_{s^{'}}}g(s,t_{s^{'}},\mu_{\theta}).$
\item Visited infinite times: Let $t_1$ denote the number of steps it takes to reach $s'$ for the first time. 
The state $s'$ will be revisited every $t_2$ steps after the previous visit. 

\begin{equation}
\sum_{t=0}^{\infty}{\gamma}^{t}g(s,t,\mu_{\theta})I(s,s^{'},t,\mu_{\theta})=\sum_{k=0}^{\infty}{\gamma}^{t_1+kt_2}g(s,t_1+kt_2,\mu_{\theta}).
\end{equation}
By the definition of $g$, $g(s,t_1+kt_2,\mu_{\theta})=g(s,t_1,\mu_{\theta}){(g(s,t_2,\mu_{\theta}))}^{k}$, we have
\begin{equation}
\sum_{t=0}^{\infty}{\gamma}^{t}g(s,t,\mu_{\theta})I(s,s^{'},t,\mu_{\theta})={\gamma}^{t_1}g(s,t_1,\mu_{\theta})\sum_{k=0}^{\infty}{({\gamma}^{t_2}g(s,t_2,\mu_{\theta}))}^{k}.
\end{equation}

We get the sum of the absolute value of a row or a column of the matrix $g(s,t_2,\mu_{\theta}))$ is no larger than
${(nc)}^{t_2}$. If we choose $\gamma$ such that $\gamma<\frac{1}{nc}$, by \cite{farnell1944limits}, the absolute value of 
any eigenvalue of ${\gamma}^{t_{2}}g(s,t_{2},\mu_{\theta})$ is strictly less than ${\gamma}^{t_{2}}{(nc)}^{t_2}=1$.

By representing ${\gamma}^{t_{2}}g(s,t_{2},\mu_{\theta})$ with Jordan normal form, i.e., ${\gamma}^{t_{2}}g(s,t_{2},\mu_{\theta})=MJM^{-1}$,

\begin{equation}
{\gamma}^{t_1}g(s,t_1,\mu_{\theta})\sum_{k=0}^{\infty}{({\gamma}^{t_2}g(s,t_2,\mu_{\theta}))}^{k}={\gamma}^{t_1}g(s,t_1,\mu_{\theta})M\sum_{k=0}^{\infty}J^{k} M^{-1}.
\end{equation}

As the absolute value of any eigenvalue of ${\gamma}^{t_{2}}g(s,t_{2},\mu_{\theta})$ is strictly less than $1$, $\sum_{k=0}^{\infty}J^{k}$ converges, then $\sum_{k=0}^{\infty}{({\gamma}^{t_2}g(s,t_2,\mu_{\theta}))}^{k}$  and $\sum_{t=0}^{\infty}{\gamma}^{t}g(s,t,\mu_{\theta})I(s,s^{'},t,\mu_{\theta})$ converge. 
\end{enumerate}

By the Lebesgue's Dominated Convergence Theorem \cite{royden1968real}, we exchange the order of the limit and the integration.
\begin{equation}
\bigtriangledown_{s}V^{\mu_{\theta}}(s)=\int_{\mathcal{S}}\sum_{t=0}^{\infty}{\gamma}^{t}g(s,t,\mu_{\theta})I(s,s^{'},t,\mu_{\theta})\bigtriangledown_{s^{'}}r(s^{'},\mu_{\theta}(s^{'}))ds^{'}.
\end{equation}
By the continuity of $T$ , $r$ and $\mu_{\theta}$, the gradient of $V^{\mu_{\theta}}(s)$ over $s$ exists.

\end{proof}

\section{Proof of Theorem 1}

\begin{proof}
By the definition,
\begin{equation}
\label{V1}
\begin{split}
\bigtriangledown_{\theta}V^{\mu_{\theta}}(s)=&\bigtriangledown_{\theta}Q^{\mu_{\theta}}(s,\mu_{\theta}(s))\\
=&\bigtriangledown_{\theta}(r(s,\mu_{\theta}(s))+\gamma V^{\mu_{\theta}}(s^{'})|_{s^{'}=T(s,\mu_{\theta}(s))})\\
=&\bigtriangledown_{\theta}\mu_{\theta}(s)\bigtriangledown_{a}r(s,a)|_{a=\mu_{\theta}(s)}+\gamma \bigtriangledown_{\theta}V^{\mu_{\theta}}(s^{'})|_{s^{'}=T(s,\mu_{\theta}(s))}\\
+&\gamma \bigtriangledown_{\theta}\mu_{\theta}(s) \bigtriangledown_{a}T(s,a)|_{a=\mu_{\theta}(s)} \bigtriangledown_{s^{'}}V^{\mu_{\theta}}(s^{'})|_{s^{'}=T(s,a)}.
\end{split}
\end{equation}

With the indicator function $I(s,s^{'},t,\mu_{\theta})$, we rewrite the equation (\ref{V1}):
\begin{equation}
\label{gra_v1}
\begin{split}
\bigtriangledown_{\theta}V^{\mu_{\theta}}(s)=&\bigtriangledown_{\theta}\mu_{\theta}(s)(\bigtriangledown_{a}r(s,a)|_{a=\mu_{\theta}(s)}+\gamma \bigtriangledown_{a} T(s,a)|_{a=\mu_{\theta}(s)} \bigtriangledown_{s^{'}}V^{\mu_{\theta}}(s^{'})|_{s^{'}=T(s,a)})\\
&+ \int_{\mathcal{S}}^{}\gamma I(s,s^{'},1,\mu_{\theta})\bigtriangledown_{\theta}V^{\mu_{\theta}}(s^{'})ds^{'}.\\
=&\bigtriangledown_{\theta}\mu_{\theta}(s)(\bigtriangledown_{a}r(s,a)|_{a=\mu_{\theta}(s)}+\gamma \bigtriangledown_{a} T(s,a)|_{a=\mu_{\theta}(s)} \bigtriangledown_{s^{'}}V^{\mu_{\theta}}(s^{'})|_{s^{'}=T(s,a)})\\
&+  \int_{\mathcal{S}}^{}\gamma I(s,s^{'},1,\mu_{\theta})\bigtriangledown_{\theta}\mu_{\theta}(s^{'})(\bigtriangledown_{a^{'}}r(s^{'},a^{'})|_{a^{'}=\mu_{\theta}(s^{'})}+\gamma \bigtriangledown_{a^{'}} T(s^{'},a^{'})|_{a^{'}=\mu_{\theta}(s^{'})}\\
&\bigtriangledown_{s^{''}}V^{\mu_{\theta}}(s^{''})|_{s^{''}=T(s^{'},a^{'})}
)ds^{'}+  \int_{\mathcal{S}}^{}\gamma I(s,s^{'},1,\mu_{\theta}) \int_{\mathcal{S}}^{}\gamma I(s^{'},s^{''},1,\mu_{\theta})\bigtriangledown_{\theta}V^{\mu_{\theta}}(s^{''})ds^{''}ds^{'}.\\
=&\bigtriangledown_{\theta}\mu_{\theta}(s)(\bigtriangledown_{a}r(s,a)|_{a=\mu_{\theta}(s)}+\gamma \bigtriangledown_{a} T(s,a)|_{a=\mu_{\theta}(s)} \bigtriangledown_{s^{'}}V^{\mu_{\theta}}(s^{'})|_{s^{'}=T(s,a)})\\
&+  \int_{\mathcal{S}}^{}\gamma I(s,s^{'},1,\mu_{\theta})\bigtriangledown_{\theta}\mu_{\theta}(s^{'})(\bigtriangledown_{a^{'}}r(s^{'},a^{'})|_{a^{'}=\mu_{\theta}(s^{'})}+\gamma \bigtriangledown_{a^{'}} T(s^{'},a^{'})|_{a^{'}=\mu_{\theta}(s^{'})}\\ &\bigtriangledown_{s^{''}}V^{\mu_{\theta}}(s^{''})|_{s^{''}=T(s^{'},a^{'})}
)ds^{'}
+  \int_{\mathcal{S}}^{}\gamma^{2} I(s,s^{''},2,\mu_{\theta})\bigtriangledown_{\theta}V^{\mu_{\theta}}(s^{''})ds^{''}.
\end{split}
\end{equation}

By unrolling (\ref{gra_v1}) with infinite steps, we get

\begin{equation}
\label{close_v1}
\begin{split}
\bigtriangledown_{\theta}V^{\mu_{\theta}}(s)&=\int_{\mathcal{S}}\sum_{t=0}^{\infty}{\gamma}^{t}I(s,s^{'},t,\mu_{\theta})\bigtriangledown_{\theta}\mu_{\theta}(s^{'})(\bigtriangledown_{a^{'}}r(s^{'},a^{'})|_{a^{'}=\mu_{\theta}(s^{'})}+\gamma
\bigtriangledown_{a^{'}} T(s^{'},a^{'})|_{a^{'}=\mu_{\theta}(s^{'})}\\ & \bigtriangledown_{s^{''}}V^{\mu_{\theta}}(s^{''})|_{s^{''}=T(s^{'},a^{'})}
)ds^{'}.
\end{split}
\end{equation}

By the definition of $J(\mu_{\theta})$, \begin{equation}
\begin{split}
\bigtriangledown_{\theta}J(\mu_{\theta})=&\bigtriangledown_{\theta}\int_{\mathcal{S}}^{}p_0(s)V^{\mu_{\theta}}(s)ds\\
=&\int_{\mathcal{S}}^{}p_0(s)\bigtriangledown_{\theta}V^{\mu_{\theta}}(s)ds.
\end{split}
\end{equation}
As \begin{equation}
\rho^{\mu_{\theta}}(s^{'})=\int_{\mathcal{S}}^{}\sum_{t=0}^{\infty}{\gamma}^{t}p_0(s)I(s,s^{'},t,\mu_{\theta})ds.
\end{equation}
By exchanging the order of the integration, we get 

\begin{equation}
\label{close_jjjj}
\bigtriangledown_{\theta}J(\mu_{\theta})=\int_{\mathcal{S}}\rho^{\mu_{\theta}}(s)\bigtriangledown_{\theta}\mu_{\theta}(s)(\bigtriangledown_{a}r(s,a)|_{a=\mu_{\theta}(s)}+\gamma  \bigtriangledown_{a} T(s,a)|_{a=\mu_{\theta}(s)} \bigtriangledown_{s^{'}}V^{\mu_{\theta}}(s^{'})|_{s^{'}=T(s,a)})ds.
\end{equation}

\end{proof}

\section{Proof of Theorem 2}

\begin{proof}
We first prove a fact that for any continuous policy $\mu_{\theta}$, there exists a discount fator $\gamma$ such that the gradient of $V^{\mu_{\theta}}(s)$ over $s$ exists.
Recall the definition of $V^{\mu_{\theta}}(s)$, we have

\begin{equation}
\begin{split}
V^{\mu_{\theta}}(s)&=Q^{\mu_{\theta}}(s,\mu_{\theta}(s))\\&=r(s,\mu_{\theta}(s))+\gamma f(s,\mu_{\theta}(s)) V^{\mu_{\theta}}(s^{'})|_{s^{'}=T(s,\mu_{\theta}(s))}+ \gamma (1-f(s,\mu_{\theta}(s)))\\&\int_{\mathcal{S}}^{}p(s^{'}|s,a)|_{a=\mu_{\theta}(s)}V^{\mu_{\theta}}(s^{'})ds^{'}.
\end{split}
\end{equation}

Then
\begin{equation}
\label{v_s2}
\begin{split}
\bigtriangledown_{s}V^{\mu_{\theta}}(s)=&\bigtriangledown_{s}r(s,\mu_{\theta}(s))+ \gamma \bigtriangledown_{s}f(s,\mu_{\theta}(s))V^{\mu_{\theta}}(s^{'})|_{s^{'}=T(s,\mu_{\theta}(s))} + \gamma f(s,\mu_{\theta}(s))\\& \bigtriangledown_{s}T(s,\mu_{\theta}(s)) \bigtriangledown_{s^{'}}V^{\mu_{\theta}}(s^{'})|_{s^{'}=T(s,\mu_{\theta}(s))} + \gamma (1-f(s,\mu_{\theta}(s)))\int_{\mathcal{S}}^{}\bigtriangledown_{s}p(s^{'}|s,\mu_{\theta}(s))\\&V^{\mu_{\theta}}(s^{'})ds^{'}-\gamma \bigtriangledown_{s}f(s,\mu_{\theta}(s))\int_{\mathcal{S}}^{}p(s^{'}|s,a)|_{a=\mu_{\theta}(s)}V^{\mu_{\theta}}(s^{'})ds^{'}.
\end{split}
\end{equation}

By unrolling (\ref{v_s2}) with infinite steps, we get

\begin{equation}
\begin{split}
\bigtriangledown_{s}V^{\mu_{\theta}}(s)=&\sum_{t=0}^{\infty}\int_{\mathcal{S}}{\gamma}^{t}g(s,t,\mu_{\theta})I(s,s^{'},t,\mu_{\theta})(\bigtriangledown_{s^{'}}r(s^{'},\mu_{\theta}(s^{'}))+ \gamma \bigtriangledown_{s^{'}}f(s^{'},\mu_{\theta}(s^{'}))V^{\mu_{\theta}}(s^{''})+ \\& \gamma (1-f(s^{'},\mu_{\theta}(s^{'})) \int_{\mathcal{S}}^{} \bigtriangledown_{s^{'}} p(s^{''}\mid  {s}^{'},\mu_{\theta}(s^{'}))V^{\mu_{\theta}}(s^{''})ds^{''}-\gamma \bigtriangledown_{s}f(s^{'},\mu_{\theta}(s^{'}))\\&\int_{\mathcal{S}}^{}p(s^{''}|s^{'},a^{'})|_{a^{'}=\mu_{\theta}(s^{'})}V^{\mu_{\theta}}(s^{''} )ds^{''})ds,
\end{split}
\end{equation}

where $I(s,s^{'},t,\mu_{\theta})$ is an indicator function that indicates whether $s^{'}$ is obtained after $t$ steps from the state $s$ following the policy $\mu_{\theta}$ and the deterministic transition. 
$g(s,t,\mu_{\theta})=\prod_{i=0}^{t-1}f(s_i,\mu_{\theta}(s_i))\bigtriangledown_{s_i}T(s_i,\mu_{\theta}(s_i)),$ where $s_0=s$.
Here, as the policy is deterministic and the calculation of the gradient with $\theta$ only involves the deterministic state transitions, $s_i$ is the state after $i$ steps following policy $\mu_{\theta}$. 
By the same technique of the proof of Lemma 1, we get that there exists a discount factor $\gamma(0<\gamma<1)$ such that 
\begin{equation}
\sum_{t=0}^{\infty}{\gamma}^{t}g(s,t,\mu_{\theta})I(s,s^{'},t,\mu_{\theta})
\end{equation}
converges. In fact, we can choose $\gamma$ such that $\gamma \times \max_{s}f(s,\mu_{\theta}(s)) < \frac{1}{nc}$, where $n$ denotes the dimension of the state, and $c$ be the maximum absolute value of elements
of all matrices $ \bigtriangledown_{s}T(s,\mu_{\theta}(s))$.

If the condition A.1 holds, i.e., for any state $s$, $\max_{s} f(s,\mu_{\theta}(s))  \leq \frac{1}{nc}$, by the proof of Lemma 1, for any discount factor, $\sum_{t=0}^{\infty}{\gamma}^{t}g(s,t,\mu_{\theta})I(s,s^{'},t,\mu_{\theta})$ converges.

If the condition A.2 holds, we have $$\gamma^{t_2}\text{max}_{\lambda}|\lambda(g(s,t_2,\mu_{\theta}))|<1.$$  Thus for any discount factor $
\sum_{t=0}^{\infty}{\gamma}^{t}g(s,t,\mu_{\theta})I(s,s^{'},t,\mu_{\theta})$ converges.

By the Lebesgue's Dominated Convergence Theorem, we exchange the order of the limit and the intergation:
\begin{equation}
\begin{split}
\bigtriangledown_{s}V^{\mu_{\theta}}(s)=&\int_{\mathcal{S}}\sum_{t=0}^{\infty}{\gamma}^{t}g(s,t,\mu_{\theta})I(s,s^{'},t,\mu_{\theta})(\bigtriangledown_{s^{'}}r(s^{'},\mu_{\theta}(s^{'}))+ \gamma \bigtriangledown_{s^{'}}f(s^{'},\mu_{\theta}(s^{'}))V^{\mu_{\theta}}(s^{''})+ \\& \gamma (1-f(s^{'},\mu_{\theta}(s^{'})) \int_{\mathcal{S}}^{} \bigtriangledown_{s^{'}} p(s^{''}\mid  {s}^{'},\mu_{\theta}(s^{'}))V^{\mu_{\theta}}(s^{''})ds^{''}-\gamma \bigtriangledown_{s}f(s^{'},\mu_{\theta}(s^{'}))\\&\int_{\mathcal{S}}^{}p(s^{''}|s^{'},a^{'})|_{a^{'}=\mu_{\theta}(s^{'})}V^{\mu_{\theta}}(s^{''} )ds^{''})ds,
\end{split}
\end{equation}
By the continuity of $T$ , $r$, $f$ and $\mu_{\theta}$, the gradient of $V^{\mu_{\theta}}(s)$ over $s$ exists. 

Now we derive the form of the policy gradient. By definition, \begin{equation}
\label{v_a1}
\begin{split}
\bigtriangledown_{\theta}V^{\mu_{\theta}}(s)=&\bigtriangledown_{\theta}r(s,\mu_{\theta}(s))+ \gamma \bigtriangledown_{\theta}f(s,\mu_{\theta}(s))V^{\mu_{\theta}}(s^{'})|_{s^{'}=T(s,\mu_{\theta}(s))} + \gamma f(s,\mu_{\theta}(s))\\& \bigtriangledown_{\theta}T(s,\mu_{\theta}(s)) \bigtriangledown_{s^{'}}V^{\mu_{\theta}}(s^{'})|_{s^{'}=T(s,\mu_{\theta}(s))} + \gamma (1-f(s,\mu_{\theta}(s)))\int_{\mathcal{S}}^{}\bigtriangledown_{\theta}p(s^{'}|s,\mu_{\theta}(s))\\&V^{\mu_{\theta}}(s^{'})ds^{'}-\gamma \bigtriangledown_{\theta}f(s,\mu_{\theta}(s))\int_{\mathcal{S}}^{}p(s^{'}|s,a)|_{a=\mu_{\theta}(s)}V^{\mu_{\theta}}(s^{'})ds^{'}+\\& \gamma f(s,\mu_{\theta}(s)) \bigtriangledown_{\theta} V^{\mu_{\theta}}(s^{'})|_{s^{'}=T(s,\mu_{\theta}(s))}+ \gamma (1-f(s,\mu_{\theta}(s)))\\&\int_{\mathcal{S}}^{}p(s^{'}|s,a)|_{a=\mu_{\theta}(s)}\bigtriangledown_{\theta}V^{\mu_{\theta}}(s^{'})ds^{'}.
\end{split}
\end{equation}
By unrolling (\ref{v_a1}) with infinite steps, we get 

\begin{equation}
\begin{split}
\bigtriangledown_{\theta}V^{\mu_{\theta}}(s)=&\int_{\mathcal{S}}\sum_{t=0}^{\infty}{\gamma}^{t}p(s,s^{'},t,\mu_{\theta})(\bigtriangledown_{\theta}r(s^{'},\mu_{\theta}(s^{'}))+ \gamma \bigtriangledown_{\theta}f(s^{'},\mu_{\theta}(s^{'}))V^{\mu_{\theta}}(s^{''})|_{s^{''}=T(s,\mu_{\theta}(s^{'}))} \\&+  \gamma f(s^{'},\mu_{\theta}(s^{'})) \bigtriangledown_{\theta}T(s^{'},\mu_{\theta}(s^{'})) \bigtriangledown_{s^{''}}V^{\mu_{\theta}}(s^{''})|_{s^{''}=T(s,\mu_{\theta}(s^{'}))} + \gamma (1-f(s^{'},\mu_{\theta}(s^{'})))\\&\int_{\mathcal{S}}^{}\bigtriangledown_{\theta}p(s^{''}|s^{'},\mu_{\theta}(s^{'}))V^{\mu_{\theta}}(s^{''})ds^{''}-\gamma \bigtriangledown_{\theta}f(s^{'},\mu_{\theta}(s^{'}))\\&\int_{\mathcal{S}}^{}p(s^{''}|s^{'},a)|_{a=\mu_{\theta}(s^{'})}V^{\mu_{\theta}}(s^{''})ds^{''})ds^{'},
\end{split}
\end{equation}

where $p(s,s^{'},t,\mu_{\theta})$ denotes the probability density of the state $s^{'}$ after $t$ steps following the policy $\mu_{\theta}$. By the definition of $J(\mu_{\theta})$ and the same technique as the proof of Theorem 1, we get (\ref{G_J}). By definition, 

\begin{equation}
Q^{\mu_{\theta}}(s,a)=r(s,a)+\gamma f(s,a) V^{\mu_{\theta}}(s^{'})|_{s^{'}=T(s,a)}+ \gamma (1-f(s,a))\int_{\mathcal{S}}^{}p(s^{'}|s,a)V^{\mu_{\theta}}(s^{'})ds^{'}.
\end{equation}

Then 
\begin{equation}
\begin{split}
\bigtriangledown_{\theta}\mu_{\theta}(s)\bigtriangledown_{a}Q^{\mu_{\theta}}(s,a)|_{a=\mu_{\theta}(s)}=&\bigtriangledown_{\theta}\mu_{\theta}(s)\bigtriangledown_{a}r(s,a)|_{a=\mu_{\theta}(s)}+\gamma f(s,\mu_{\theta}(s)) \bigtriangledown_{\theta}\mu_{\theta}(s) 
\\&\bigtriangledown_{a} T(s,a)|_{a=\mu_{\theta}(s)} \bigtriangledown_{s^{'}}V^{\mu_{\theta}}(s^{'})|_{s^{'}=T(s,a)}+\gamma (1-f(s,\mu_{\theta}(s)))\\&\int_{\mathcal{S}}^{}\bigtriangledown_{\theta}\mu_{\theta}(s)\bigtriangledown_{a}p(s^{'}|s,a)|_{a=\mu_{\theta}(s)}V^{\mu_{\theta}}(s^{'})ds^{'}+\gamma \bigtriangledown_{\theta}f(s,\mu_{\theta}(s))\\&V^{\mu_{\theta}}(s^{'})|_{s^{'}=T(s,\mu_{\theta}(s))}-\gamma \bigtriangledown_{\theta}f(s,\mu_{\theta}(s))\int_{\mathcal{S}}^{}p(s^{'}|s,a)V^{\mu_{\theta}}(s^{'})ds^{'}.
\end{split}
\end{equation}

Thus, we get that the policy gradient of (\ref{G_J}) is equivalent to the form of the DPG theorem.
\end{proof}

\section{Proof of Theorem 3}

\begin{proof}
By definition, we have
\begin{equation}
\forall s, V^{\mu_{\theta}}(s)=r(s,\mu_{\theta}(s))+ \gamma \int_{s\sim D(s,\mu_{\theta}(s))}^{} V^{\mu_{\theta}}(s^{'})ds^{'},
\end{equation}
where $D(s,\mu_{\theta}(s))$ denotes the distribution of the next state. As the value function is convex, we get
\begin{equation}
\forall s, V^{\mu_{\theta}}(s)\geq r(s,\mu_{\theta}(s))+ \gamma V^{\mu_{\theta}}(s^{'})|_{s^{'}=T^{*}(s,\mu_{\theta}(s))}.
\end{equation}
By definition,
\begin{equation}
\forall s, V_{*}^{\mu_{\theta}}(s)= r(s,\mu_{\theta}(s))+ \gamma V_{*}^{\mu_{\theta}}(s^{'})|_{s^{'}=T^{*}(s,\mu_{\theta}(s))}.
\end{equation}
Thus 
\begin{equation}
\label{neq1}
\forall s, V^{\mu_{\theta}}(s)-V_{*}^{\mu_{\theta}}(s) \geq \gamma (V^{\mu_{\theta}}(s^{'})-V_{*}^{\mu_{\theta}}(s^{'}))|_{s^{'}=T^{*}(s,\mu_{\theta}(s))}.
\end{equation}

As these two value functions are bounded, there is a lower bound $C$ such that
 \begin{equation}
 \label{bound1}
\forall s, V^{\mu_{\theta}}(s)-V_{*}^{\mu_{\theta}}(s) \geq C.
\end{equation}

Combining (\ref{neq1}) with (\ref{bound1}) repeatedly, we obtain 
\begin{equation}
\forall s, V^{\mu_{\theta}}(s)\geq V_{*}^{\mu_{\theta}}(s).
\end{equation}

Note that
\begin{equation}
J(\mu_{\theta})=\int_{\mathcal{S}}^{}p_0(s)V^{\mu_{\theta}}(s)ds.
\end{equation}
and 
\begin{equation}
J_{*}(\mu_{\theta})=\int_{\mathcal{S}}^{}p_0(s)V_{*}^{\mu_{\theta}}(s)ds.
\end{equation}

Thus $J(\mu_{\theta})\geq J_{*}(\mu_{\theta}).$
\end{proof}

\section{Implementation Details}
In this section we describle the details of the implementation of GDPG.  The configuration of the actor network and the augmented critic network is the same as the implementation of OpenAI Baslines. 
Each network has two fully connected layers, where each layer has 64 units. 
The activation function is RelU, the batch size is $128$, the learning rate of the actor is ${10}^{-4}$, and the learning rate of the critic is ${10}^{-3}$. 

We exploit the model-based technique by estimating the state transition function using deep neural networks. 
For problems with low-diemensional input space including ComplexPoint-v0, Pendulum-v0, HalfCheetah-v2, LunarLanderContinuous-v2, we use the two layers fully connected structure for the transition network. 
For problems which are more complex, including  Humanoid-v2, HumanoidStandup-v2, we apply the Convolutional Neural Networks (CNN). 
In particular, the network contains two layers of CNN followed by a fully connected layer. The configuration for the CNN layer is as listed in Table \ref{table: config}. The learning rate of the transition network is ${10}^{-3}$. We also add $L_2$ norm regularizer to the loss and the batch size is $128$.

Note that the weight of our objective affects the performance of GDPG as discussed in Section 5.3, we test different value of $\alpha$ on all environments, and
we get that the value of $\alpha=0.9$ achieves the best performance in all environments.

\begin{table}
\centering
\begin{tabular}{c|c} 
\hline \hline
\addstackgap[2pt]{\textbf{Paramter}} & \textbf{Value} \\ \hline
\addstackgap[2pt]{Filters for Layer 1} & 32 \\
\addstackgap[2pt]{Filters for Layer 2} & 64 \\
\addstackgap[2pt]{Kernel Size} & 5 \\
\addstackgap[2pt]{Paxdding Mode} & Same \\
\addstackgap[2pt]{Pooling Size} & 2 \\
\addstackgap[2pt]{Strides} & 2 \\
\addstackgap[2pt]{Activation Function} & ReLU\\
\hline \hline
\end{tabular}
\caption{Configurations.}
\label{table: config}
\end{table}

\begin{table}
\centering
\begin{tabular}{l|c|c}
\hline \hline
{\textbf{Environment}} & $\boldsymbol{||\mathcal{S}||}$ & $\boldsymbol{||\mathcal{A}||}$ \\
\hline
{ComplexPoint-v0} &  5&  5\\ \hline
{Pendulum-v0} & 3 & 1 \\ \hline
{LunarLanderContinuous-v2} & 8 & 2\\ \hline
{Swimmer-v2} & 8 & 2 \\
{HalfCheetah-v2} & 17 & 6 \\
{HumanoidStandup-v2} & 376 & 17 \\
{Humanoid-v2} & 376 & 17 \\
\hline \hline
\end{tabular}
\caption{List of environments.}
\label{table: envs}
\end{table}

\end{document}